\newtheorem{theorem}{Theorem}[section]
\newtheorem{definition}[theorem]{Definition}
\newtheorem{proposition}[theorem]{Proposition}
\long\def\ignore#1{}
\def\x{{\mathbf{x}}}
\def\tree{{\mathcal{T}}}
\def\X{{\bf X}}
\def\C{{\mathcal{C}}}
\def\V{{\mathcal{V}}}
\def\E{{\mathcal{E}}}
\def\calC{{\cal C}}
\def\calE{{\cal E}}
\def\calF{{\cal F}}
\def\calL{{\cal L}}
\def\calV{{\cal V}}
\newcommand{\bx}{\mbox{\boldmath $x$}}
\newcommand{\bX}{\mbox{\boldmath $X$}}
\def\myparagraph#1{\vspace{2pt}\noindent{\bf #1~~}}
\ificcvfinal\pagestyle{empty}\fi
\begin{document}

\title{\vspace{-10mm} Simplifying  Energy Optimization using Partial Enumeration   \vspace{-3mm}}
 
\author{Carl Olsson     \hspace{8ex}     Johannes  Ul\'en   \\
Centre for Mathematical Sciences\\
Lund University, Sweden\\
{\tt\small calle@maths.lth.se    \hspace{0.1ex}    ulen@maths.lth.se}
\and
Yuri Boykov \\
Computer Science\\
UWO, Canada\\
{\tt \small yuri@csd.uwo.ca}
\and
Vladimir Kolmogorov \\
Inst. of Science \& Technology \\ Austria\\
{\tt \small vnk@ist.ac.at}
}

\maketitle

\begin{abstract} \vspace{-3mm}
Energies with high-order non-submodular interactions 
have been shown to be very useful in vision due to their high
modeling power. Optimization of such energies, however,
is generally NP-hard. A naive approach that works for
small problem instances is exhaustive search, that is, enumeration of all possible labelings of the underlying graph.
We propose a general minimization approach for large graphs based on enumeration of labelings of certain small patches.
This {\em partial enumeration} technique reduces complex high-order energy formulations to pairwise 
{\em Constraint Satisfaction Problems} with unary costs (uCSP), which can be efficiently solved using standard methods like TRW-S.
Our approach outperforms a number of existing state-of-the-art algorithms on well known difficult problems 
(e.g. curvature regularization, stereo, deconvolution); it gives near global minimum and better speed.

Our main application of interest is curvature regularization.
In the context of segmentation, our partial enumeration technique allows to evaluate curvature directly on small patches
using a novel {\em integral geometry} approach.
\footnote{
This work has been funded by the Swedish Research Council (grant 2012-4213), the Crafoord Foundation, 
the Canadian Foundation for Innovation (CFI 10318) and the Canadian NSERC Discovery Program (grant 298299-2012RGPIN).
We would also like to thank Prof. Olga Veksler for referring to {\em partical enumeration} as a ``cute idea''.
}


\end{abstract}
\vspace{-3mm}
\section{Introduction}

Optimization of curvature and higher-order regularizers, in general,  has significant potential in segmentation, 
stereo, 3D reconstruction, image restoration, in-painting, and other applications. It is widely known as
a challenging problem with a long history of research in computer vision. For example, when Geman and Geman 
introduced MRF models to computer vision \cite{geman-geman-pami-1984} they proposed first- and second-order 
regularization based on {\em line process}. The popular {\em active contours} framework \cite{Kass:88} uses elastic (first-order) 
and bending (second-order) energies for segmentation. Dynamic programming was used for curvature-based inpainting \cite{MasnouMorel:98}. Curvature was also studied within PDE-based  \cite{ChanShen:01} 
and {\em level-sets} \cite{DroskeRumpf:04} approaches to image analysis. 

Recently there has been a revival of interest in second-order smoothness for discrete MRF settings.
Due to the success of global optimization methods for first-order MRF models \cite{boykov-etal-pami-2001,ishikawa-pami-2003}
researchers now focus on more difficult second-order functionals \cite{woodford2009} including various discrete 
approximations of curvature \cite{schoenemann-etal-ijcv-2012,elzehiry-grady-cvpr-2010,strandmark-kahl-emmcvpr-2011}. 
Similarly, recent progress on global optimization techniques for first-order continuous geometric functionals \cite{Nikolova:SIAM06,Pock:SIAM10,Lellmann:SIAM11,yuan:eccv10} has lead to extensions for curvature \cite{Pock:JMIV12}.

Our paper proposes new discrete MRF models for approximating curvature regularization terms like $\int_{\partial S} |\kappa| d\sigma$. 
We primarily focus on the absolute curvature. Unlike length or squared curvature regularization, this term does not add
shrinking or ballooning bias. 

Our technique evaluates curvature using small patches either on a grid or on 
a cell complex, as illustrated in Fig.\ref{fig:gridVScomplex}. In case of a grid, our patches use a novel {\em integral geometry}
approach to evaluating curvature. In case of a complex, our patch-based approach can use standard geometry for evaluating 
curvature. The relationship to previous discrete MRF models for curvature is discussed in Section \ref{sec:related_curvature}.

We also propose a very simple and efficient optimization technique, {\em partial enumeration}, directly applicable 
to curvature regularization and many other complex (e.g. high-order or non-submodular) problems. 
Partial enumeration aggregates the graph nodes within some overlapping patches. While the label space of each patch
is larger compared to individual nodes, the interactions between the patches become simpler.  
Our approach can reduce high-order discrete energy formulations to pair-wise {\em Constraint Satisfaction Problem} 
with unary costs (uCSP). The details of our technique and related work are in Section~\ref{sec:graphconst}.

\begin{figure*}
\begin{center}
\begin{tabular}{ccc}
\includegraphics[height=57mm]{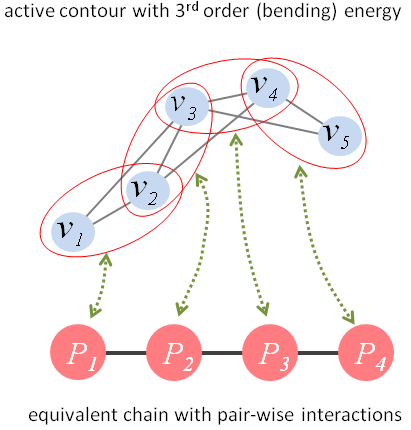} &
\includegraphics[height=57mm]{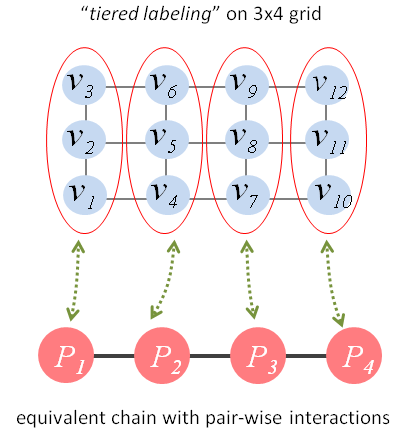} &
\includegraphics[height=57mm]{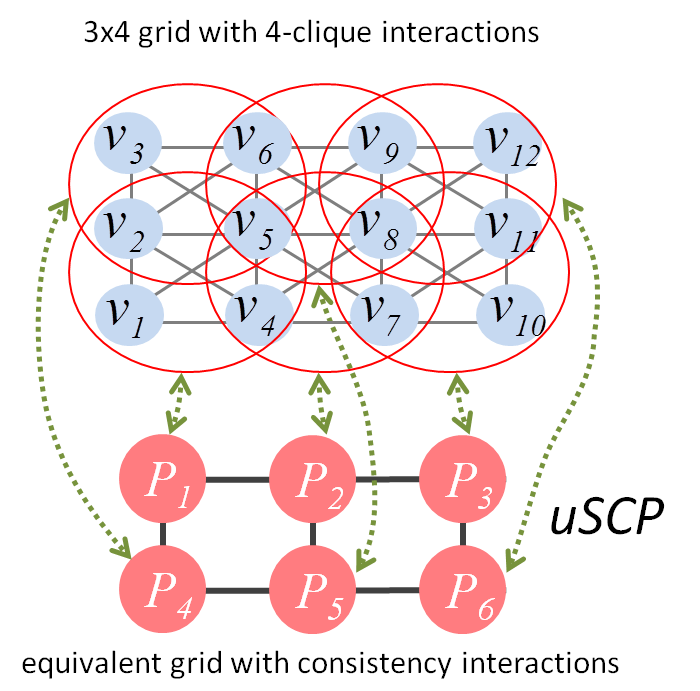} \\
(a) active contours & (b) tiered labeling & (c) more general
\end{tabular}
\end{center}
\caption{Examples of {\em partial enumeration}. Some instances of partial enumeration can be found in 
prior art: 3-rd order active contour model and  {\em tiered labeling} energy on a grid  
\cite{felzenszwalb-veksler-cvpr-2010}  are reduced to pairwise energies on a chain that can be optimized 
by dynamic programming (a-b). {\em Junction tree} approach to energy minimization \cite{Koller:09} can also be seen as a specific form of 
partial enumeration. In general, we show that partial enumeration could be useful for simplifying 
a wide class of complex (non-submodular or high-order) optimization problems without reducing them 
to a chain or a tree. For example (c), if high-order factors on the upper grid are covered by overlapping 2x2 patches, than 
the equivalent optimization problem on the lower graph needs only pairwise interactions
enforcing consistency between the super-nodes representing the patches. }
\label{fig:PEexamples}
\end{figure*}

Some specific examples of {\em partial enumeration} can be found in prior art. For example,
to optimize a {\em snake} with a bending (3rd-order) energy it is customary to combine each pair of adjacent control points
into a single super-node, see Fig.\ref{fig:PEexamples}(a). If the number of states for each control point is $m$ then the number 
of states for the super-node is $m^2$. Thus, the label space has increased. On the other hand, the 3rd-order bending energy on the original snake is simplified to a pair-wise energy on the chain of super-nodes, which can be 
efficiently solved with dynamic programming in $O(nm^3)$. Analogous simplification of a high-order {\em tiered labeling} 
energy on a grid graph to a pair-wise energy on a chain was proposed in \cite{felzenszwalb-veksler-cvpr-2010}, 
see Fig.\ref{fig:PEexamples}(b). Their approach can also be seen as special case of partial enumeration, even though 
non-overlapping ``patches'' are sufficient in their case. 
We study {\em partial enumeration} as a more general technique for simplifying complex 
(non-submodular or high-order) energies without necessarily reducing the problems to chains/trees, 
see Fig.\ref{fig:PEexamples}(c).

Our contributions can be summarized as follows:
\begin{itemize} \vspace{-1ex}
\item simple patch-based models for curvature \vspace{-1.5ex}
\item integral geometry technique for evaluating curvature \vspace{-1.5ex}
\item easy-to-implement partial enumeration technique reducing patch-based MRF models 
to a pairwise {\em Constraint Satisfaction Problem} with unary costs directly addressable with 
many approximation algorithms \vspace{-1.5ex}
\item our uCSP modification of TRWS outperforms several alternatives
producing near-optimal solutions with smaller optimality gap and shorter running times \vspace{-1ex}
\end{itemize}
The experiments in Sections \ref{sec:graphconst} and \ref{sec:exp} show that our patch-based technique obtains 
state-of-the-art results not only for curvature-based segmentation, but also for high-order stereo
and deconvolution problems.
\begin{figure*}[htb]
\begin{center}
\small
\begin{tabular}{c@{\hspace{5ex}}c}
(a) curvature patches on a cell complex (basic geometry) & 
(c) curvature patches on a pixel grid  (integral geometry)  \\[1ex]
\includegraphics[width=50mm]{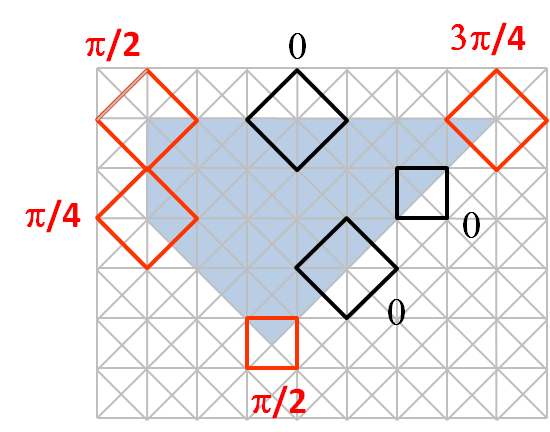}  &
\includegraphics[width=55mm]{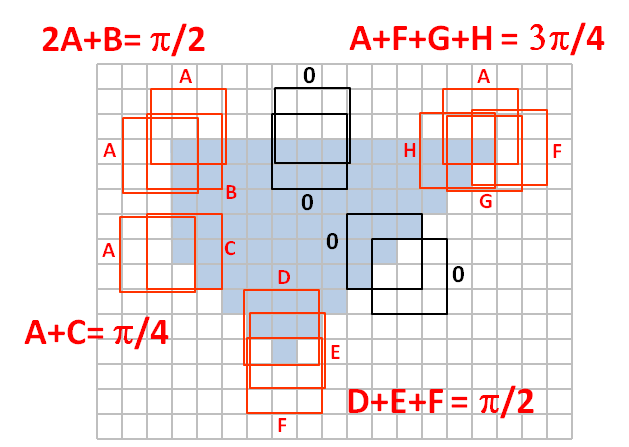} \\[1ex]
(b) our cell-complex patches (8-connected), & (d) our pixel-grid patches (3x3), \\
up to symmetries, and resulting segmentation. & up to symmetries, and resulting segmentation.
\end{tabular}
\begin{tabular}{cc@{\hspace{8ex}}cc}
\includegraphics[height=17mm]{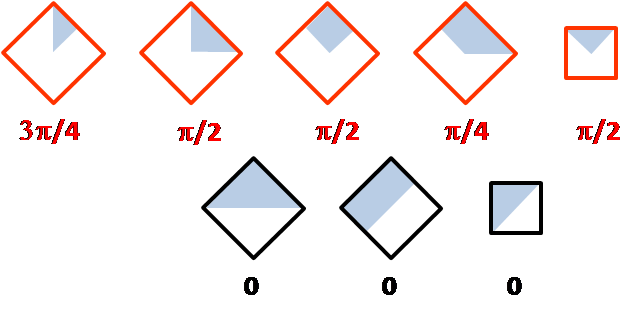}  &
\includegraphics[width=35mm]{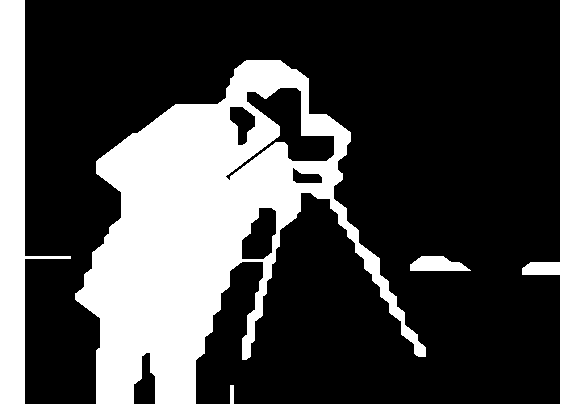} &
\includegraphics[height=17mm]{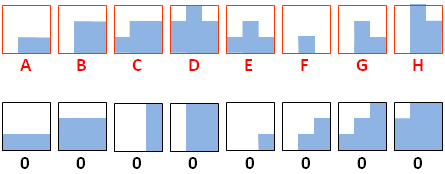} &
\includegraphics[width=35mm]{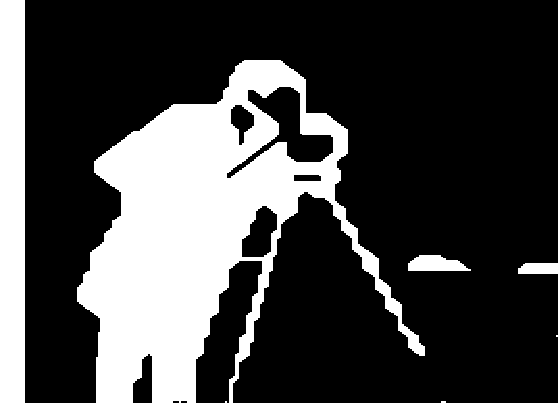} 
\end{tabular}
\end{center} \vspace{-3ex}
\caption{\small Evaluating curvature of a segment on a complex (a,b) and on a grid (c,d) 
using standard and integral geometry.
At sufficiently high resolution, any segment $C$ is a polygon. Thus, to minimize
curvature functionals like $\int_C |\kappa| ds$ we need to evaluate all corners.
We use (overlapping) patches created for each vertex on a complex (a) and for each pixel on a grid (c).
A patch on a complex (a,b) consists of all cells adjacent to a vertex and a grid patch (c,d) is a square window 
centered at a pixel. For $\pi/4$ precision as on 8-complex (a), we use 3x3 windows on a grid (b). 
For finer $\pi/8$ precision as on 16-complexes, we use 5x5 windows. 
Note that each corner on a complex (a) can be directly evaluated from a configuration (labeling) 
of a single patch using standard geometry. However,  each corner on a grid (c) should be evaluated using 
integral geometry by summing over multiple patches covering the corner. 
Patch configurations in black occur at straight boundaries and should contribute zero weights. 
Patch configurations in red correspond to curved boundaries. The weights $A,\dots,H$ for all such configurations (d) 
can be pre-computed from a system of linear equations for all types of corners. 
The accuracy of integral geometry approach to curvature on a grid is comparable 
to the standard basic geometry used on complexes, see (b) and (d).}
\label{fig:gridVScomplex}
\vspace{-3mm}
\end{figure*}

\section{Curvature on patches and related work} \label{sec:related_curvature}

We discuss approximation of curvature in the context of binary segmentation 
with regularization energy
\begin{equation} \label{curvatureint}
E(S) = \int_{\text{int}(S)} f(x) \ dx + \int_{\partial S} \lambda|\kappa| d\sigma,
\end{equation}
where $\kappa$ is curvature, $\lambda$ is a weighting parameter, and unary potential $f(x)$ is a data term. 

Our grid-patches in Fig.\ref{fig:gridVScomplex}(c) and our complex-patches in Fig.\ref{fig:gridVScomplex}(a)
can be seen as ``dual'' methods for estimating curvature in exactly the same way as {\em geo-cuts}  \cite{BK:iccv03} 
and complex-based approach in \cite{Sullivan:thesis92} are ``dual'' methods for evaluating geometric length.
Our grid-patch approach to curvature extends ideas in {\em geo-cuts}  \cite{BK:iccv03} that showed how discrete MRF-based regularization methods can use {\em integral geometry} to accurately approximate length via Cauchy-Crofton formula.
We show how general integral geometry principles can also be used to evaluate curvature, see Fig.\ref{fig:gridVScomplex}(c).
The complex-patch technique in Fig.\ref{fig:gridVScomplex}(a) uses an alternative method for approximating 
curvature based on standard geometry as in  \cite{schoenemann-etal-ijcv-2012,elzehiry-grady-cvpr-2010,strandmark-kahl-emmcvpr-2011}.

Our patch-based curvature models could be seen as extensions of {\em functional lifting} 
\cite{Pock:JMIV12} or {\em label elevation} \cite{olsson:CVPR12}. Analogously to the {\em line processes} in 
\cite{geman-geman-pami-1984}, these second-order regularization methods use variables 
describing both location and orientation of the boundary. Thus, their curvature is the first-order (pair-wise) energy.
Our patch variables include enough information about the local boundary to reduce the curvature to unary terms.

Curvature is also reduced to unary terms in \cite{schoenemann-etal-ijcv-2012} using auxiliary variables for
each pair of adjacent {\em line processes}. Their integer LP approach to curvature is formulated over a large number of
binary variables defined on fine geometric primitives (vertexes, faces, edges, pairs of edges, etc), which are tied
by constraints. In contrast, our unary representation of curvature uses larger scale geometric primitives 
(overlapping patches) tied by consistency constraints. The number of corresponding variables is significantly smaller, but
they have a much larger label space. Unlike \cite{schoenemann-etal-ijcv-2012} and us,
\cite{elzehiry-grady-cvpr-2010,strandmark-kahl-emmcvpr-2011} represent curvature via high-order interactions/factors. 

Despite technical differences in the underlying formulations and optimization algorithms, our patch-based approach 
for complexes in Fig.\ref{fig:gridVScomplex}(a) and \cite{schoenemann-etal-ijcv-2012,strandmark-kahl-emmcvpr-2011} 
use geometrically equivalent models for approximating curvature. That is, all of these models would produce the same solution, 
if there were exact global optimization algorithms for them. 
The optimization algorithms for these models do however vary, both in quality, memory, and run-time efficiency.

In practice, grid-patches are easier to implement than complex-patches
because the grid's regularity and symmetry.
While integral geometry estimates curvature on a pixel grid as accurately 
as the standard geometry on a cell complex, see Figs.\ref{fig:gridVScomplex}(b,d), in practice, our proposed optimization 
algorithm for the corresponding uCSP problems works better (with near-zero optimality gap) for the grid version of our method. 
To keep the paper focused, the rest of the paper primarily concentrates on grid-based patches. 

Grid patches were also recently used for curvature evaluation in \cite{shekhovtsov-etal-dagm-2012}.
Unlike our integral geometry in Fig.\ref{fig:gridVScomplex}(c), their method computes a minimum response over 
a number of affine filters encoding some learned ``soft'' patterns. The response to each filter combines 
deviation from the pattern and the cost of the pattern. The mathematical justification of this approach to curvature estimation
is not fully explained and several presented plots indicate its limited accuracy. As stated in \cite{shekhovtsov-etal-dagm-2012},
``the plots do also reveal the fact that we consistently overestimate the true curvature cost.''
The extreme ``hard'' case of this method may reduce to our technique if the cost of each pattern
is assigned according to our integral geometry equations in Fig.\ref{fig:gridVScomplex}(c). However, this case
makes redundant the filter response minimization and the pattern costs learning, which are the key 
technical ideas in \cite{shekhovtsov-etal-dagm-2012}.

\section{Simple Patch-based Optimization} \label{sec:graphconst}

One way to optimize our patch-based curvature model is to formulate the optimization problem on 
the original image pixel grid $\langle V,\C\rangle$ in Figure~\ref{fig:PEexamples}(c, top grid) using
pixel variables $\x=\{x_i|i\in V\}$, high-order factor $\alpha\in\C$, and energy
\begin{equation}
E(\x) = \sum_{\alpha\in\C} E_\alpha (\x_\alpha)
\label{eq:clusterenergy}
\end{equation}
where $\x_\alpha=\{x_i|i\in\alpha\}$ is the restriction of $\x$ to $\alpha$.
Optimization of such high-order energies is generally NP-hard, but a number of existing approximate algorithms 
for certain high-order MRF energies could be applied. Our experimental section includes the results of some generic 
methods  \cite{GTRWS:arXiv12,kahl-strandmark-dam-2012}  that have publicly available code. 

We propose a different approach for optimizing our high-order curvature models that equivalently reformulates 
the problem on a new graph, see Figure~\ref{fig:PEexamples}(c, bottom grid). The motivation is as follows. 
One naive approach applicable to NP-hard high-order energies on small images 
is exhaustive search that enumerates all possible labelings of the underlying pixel graph.
On large problems one can use partial enumeration to simplify high-order problems. 
If some set of relatively small overlapping patches covers all high-order factors,
we can build a new graph where nodes correspond to patches and their labels enumerate patch states, 
as in Figure~\ref{fig:PEexamples}(c, bottom grid). Note that high-order interactions reduce to unary potentials, 
but, due to patch overlap, hard pair-wise consistency constraints must be enforced. 

Our general approach transforms a high-order optimization problem to a pair-wise Constraint Satisfaction Problem 
with unary costs (uCSP).
Formally, the corresponding energy could be defined on graph $\langle \V,\E\rangle$ in Figure~\ref{fig:PEexamples}(c, bottom grid) 
where nodes correspond to a set of patches $\V$ with the following property: for every factor $\alpha\in\C$ 
there exists patch $\beta\in\V$ such that $\alpha\subseteq\beta$. For example, $\V=\C$ works, but, in general, 
patches in  $\V$ can be bigger than factors in $\C$. We refer to nodes in $\V$ as super nodes. 
Clearly, \eqref{eq:clusterenergy} could be equivalently rewritten as an energy with unary and pairwise terms:
\begin{equation}
E_\text{super}(\X) = \sum_{\alpha \in \V} U_\alpha (X_\alpha) + \sum_{(\alpha,\beta) \in \E} P_{\alpha \beta}(X_\alpha,X_\beta)
\label{eq:superenergy}
\end{equation}
The label $X_\alpha$ of a super node $\alpha$ corresponds to the state of all individual pixels $\x_\alpha$ within the patch.
By enumerating all possible pixel states within the patch we can now encode the higher order factor 
$E_\alpha(x_\alpha)$ into the unary term $U_\alpha(X_\alpha)$ of \eqref{eq:superenergy}.
The pairwise consistency potential $P_{\alpha\beta}(X_\alpha,X_\beta)=0$ if variables 
$X_\alpha$ and $X_\beta$ agree on the overlap $\alpha\cap\beta$, and $P_{\alpha\beta}(X_\alpha,X_\beta)=+\infty$ otherwise. 
The set of edges $\E$ may contain all pairs 
$\{\alpha,\beta\}\subset\V$ such that $\alpha\cap\beta\ne\varnothing$, but a smaller could be enough. For example,
the graph in Figure~\ref{fig:PEexamples}(c, bottom grid) 
does not need diagonal edges. A formal procedure for selecting the set of edges
is given in Appendix~\ref{app:lprelaxation}.

Optimization of pairwise energy \eqref{eq:superenergy} can be addressed with standard methods 
like~\cite{kolmogorov-pami-2006,Globerson:NIPS07} that can be modified for our specific consistency constraints 
to gain significant speed-up (see Sec.\ref{sec:speedup}).

\noindent{\bf LP relaxations~~}
When we apply method like TRW-S~\cite{kolmogorov-pami-2006} to energy~\eqref{eq:superenergy}, we essentially solve a higher-order relaxation of the original energy~\eqref{eq:clusterenergy}.
Many methods have been proposed in the literature for solving higher-order relaxations, e.g.~\cite{sontag-etal-uai-2008,komodakis-etal-cvpr-2009,Meltzer:UAI09,Werner:PAMI10,GTRWS:arXiv12} to name just a few.
To understand the relation to these methods, in  Appendix~\ref{app:lprelaxation} we analyze which specific relaxation is solved by our approach.
We then argue that the complexity of message passing in our scheme roughly matches that
of other techniques that solve a similar relaxation. 
\footnote{Message
 passing techniques require the minimization of expressions of the form
 $E_\alpha(\x_\alpha)+\ldots$ where dots denote lower-order factors.
 Here we assume that this expression is minimized by going through all possible labellings $\x_\alpha$.
 This would hold if, for example, $E_\alpha(\cdot)$ is represented by a table (which is the case
 with curvature). Some terms $E_\alpha(\cdot)$ used in practice have a special structure
 that allow more efficient computations; in this case other techniques may have a better
 complexity. One example is {\em cardinality-based potentials}~\cite{Tarlow:AISTATS10} which can have a very high-order.}
In practice, the choice of the optimization method is often motivated by the ease of implementation;
we believe that our scheme has an advantage in this respect, and thus may be preferred by practitioners.

\noindent{\bf Other related work~~}
The closest analogue of our approach is perhaps the ``hidden transformation'' approach~\cite{Bacchus:AI02} that converts
an arbitrary CSP into a pairwise CSP (also known as the ``constraint satisfaction dual problem'').
We provide a {\em weighted} version of this transformation; to our knowledge, this has not been studied yet, 
and the resulting relaxation has not been analyzed.

Our method bears some resemblance to the work~\cite{komodakis-etal-cvpr-2009} that also uses square patches.
However, we believe that the relaxation solved in~\cite{komodakis-etal-cvpr-2009} is weaker than ours; details
are discussed in the Appendix~\ref{app:lprelaxation}.

Researchers also considered alternative techniques for converting a high-order energy of binary variables into a pairwise one.
We will compare to one such technique, \cite{kahl-strandmark-dam-2012}, which generalizes roof duality to factors of order 3 and 4.

\subsection{Application to $\pi / 2$-precision curvature}
\begin{figure*}[htb]
\small
\vspace{-2mm}
\begin{center}
\includegraphics[width=25mm]{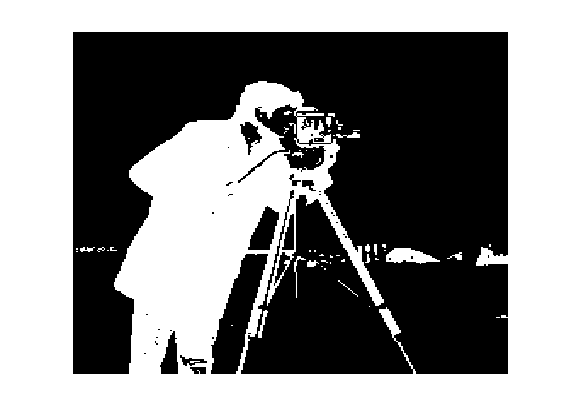}
\includegraphics[width=25mm]{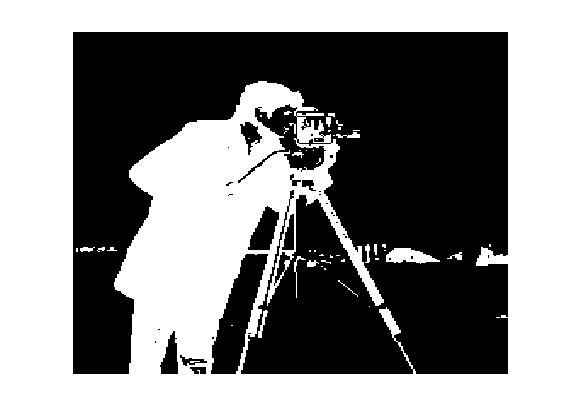}
\includegraphics[width=25mm]{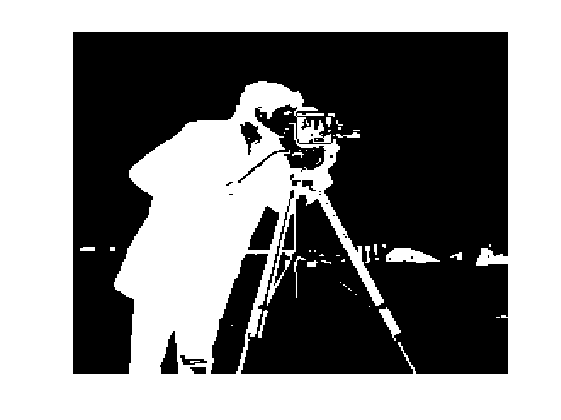}
\includegraphics[width=25mm]{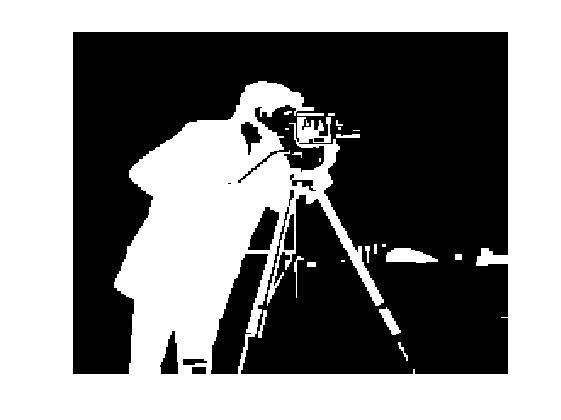}
\includegraphics[width=25mm]{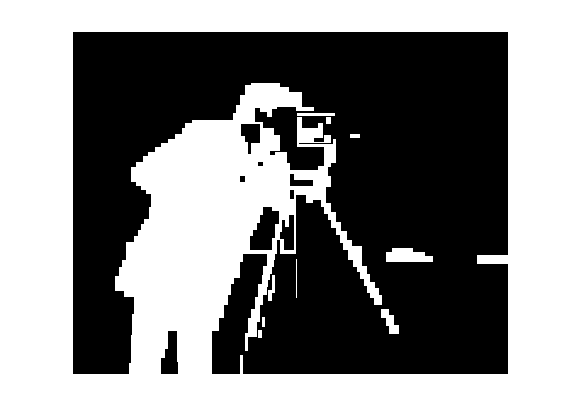}
\includegraphics[width=25mm]{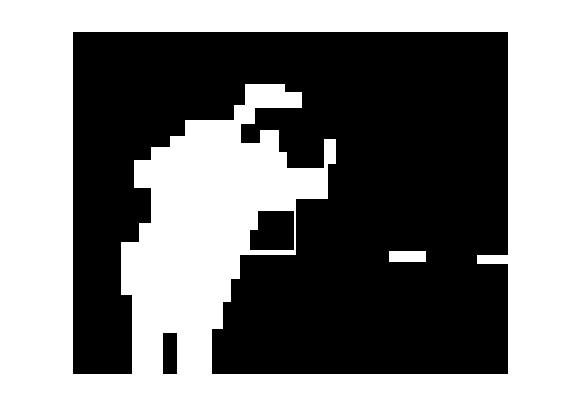}
\setlength{\tabcolsep}{3pt}
\begin{tabular}{lccccc}
\toprule
$\lambda$ &TRW-S Energy & TRW-S Lower bound &  Unlabled by GRD(-heur)  &  TRW-S running time & GRD(-heur) running time\\
\midrule
$2.5\cdot10^{-5}$ &    $4677$  &  $4677$  &  $0\%$ ($0\%$)  & $0.934$s &  $10737$s ($7.08$s)\\
$2.5\cdot10^{-4}$ &   $4680$ &  $4680$ &   $0\%$ ($ 0\%$)  & $0.961$s & $9287$s ($10.7$s)\\
$2.5\cdot10^{-3}$ &   $4707$ &  $4707$ &   $0.2\%$ ($0.2\%$) & $2.43$s & $10731$s ($7.32$s)\\
$2.5\cdot10^{-2}$ &    $4910$  &  $4910$  & $6.7\%$ ($6.8\%$) & $3.98$s & $12552$s ($6.96$s)\\
$2.5\cdot10^{-1}$  &  $5833$   & $5833$  & $100\%$ ($100\%$) & $14.3$s & $12337$s ($10.9s$)\\
$2.5\cdot10^{0}$ &   $7605$ &   $ 7605$ & $100\%$ ($100\%$) & $28.8$s & $7027$s ($22.2$s)\\
\bottomrule
\end{tabular}
\end{center}
\vspace{-5mm}
\caption{\small Our results for $2\times2$ curvature with different regularization weight $\lambda$ (top row of images). 
TRW-S with super nodes gives a tight lower bound. The figures within parenthesis are results when using the heuristics proposed for speedup in \cite{kahl-strandmark-dam-2012}). }
\label{GRDcomparison}
\end{figure*}
\setlength{\tabcolsep}{6pt}

In this section we illustrate our approach on a very coarse approximation of curvature where we only allow boundary edges that are either 
horizontal or vertical. It is shown in \cite{elzehiry-grady-cvpr-2010} that the resulting energy can be formulated as in \eqref{eq:clusterenergy} where $\C$ contains the edges of an 8-connected neighborhood, see Fig.~\ref{fig:orggraph}.
In contrast we formulate the problem as \eqref{eq:superenergy} where $\V$ is the set of all $2\times2$ patches.
Consider the patches in Figure~\ref{basepatches} and their curvature estimates.
\begin{figure}[htb]
\begin{center}
\begin{tabular}{cccccc}
\includegraphics[width=8mm]{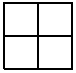} &
\includegraphics[width=8mm]{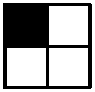} &
\includegraphics[width=8mm]{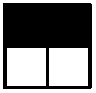} &
\includegraphics[width=8mm]{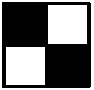} 
\end{tabular}
\end{center}
\vspace{-6mm}
\caption{\small Four of the 16 patch states used to encode curvature with penalties $0$, $\pi/2$, $0$ and $2\pi$ respectively.}
\label{basepatches}
\end{figure}
The patches have 4 pixel boundaries that intersect in the middle of the patch. To compute the curvature contribution of a patch we need to determine which of the 4 pixel boundaries also belong to the segmentation boundary.
If two neighboring pixels (sharing a boundary) have different assignments then their common boundary belongs to the segmentation boundary.

Figure \ref{fig:newgraph} shows the approach.
\begin{figure}
\begin{center}
\begin{tabular}{cc}
\includegraphics[width=25mm]{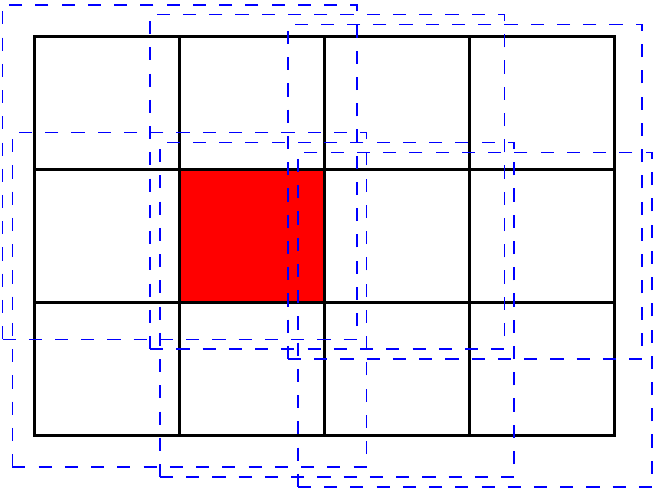} &
\includegraphics[width=35mm]{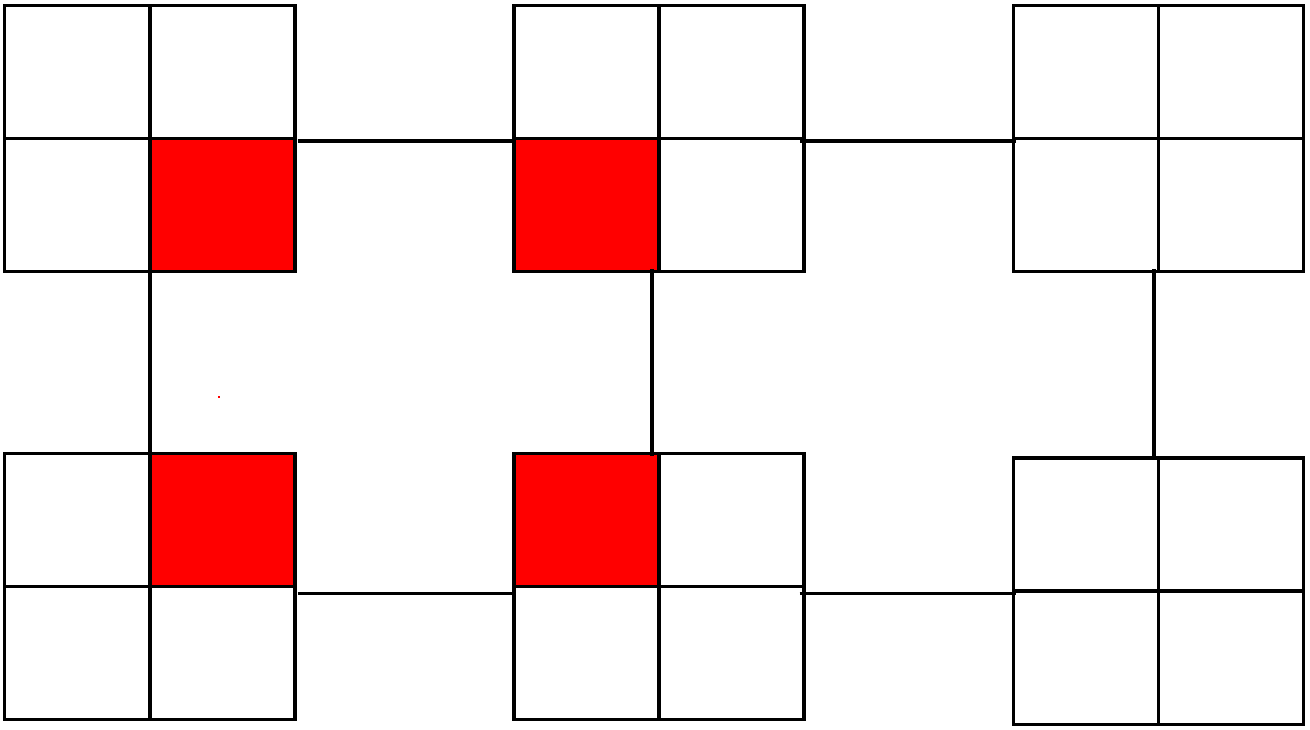} 
\end{tabular}
\end{center}
\vspace{-5mm}
\caption{\small Super nodes formed in a sliding window fashion. The red pixel occurs in 4 super nodes. Pairwise interactions ensure that shared pixels are assigned the same value.}
\label{fig:newgraph}
\end{figure}
We start by forming patches of size $2\times2$ into super nodes. This is done in a sliding window fashion, that is, super node $(r,c)$ consists of the nodes $(r,c)$, $(r,c+1)$, $(r+1,c)$ and $(r+1,c+1)$, where $r$ and $c$ are the row and column coordinates of the pixels. 

Each super node label can take 16 values corresponding to states of the individual pixels.
The curvature interaction and data terms of the original problem are now transformed to unary potentials. Note that since patches are overlapping pixels can be contained in up to four super nodes. 
In order not to change the energy we therefore weight the contribution from the original unary term, $f(x)$ in \eqref{curvatureint}, to each patch such that the total contribution is 1.
For simplicity we give pixels that occur $k$ times the weight $1/k$ in each super node.

Finally to ensure that each pixel takes the same value in all the super nodes where it is contained we add the "consistency" edges $\E$ between neighboring super nodes (see Fig. \ref{fig:newgraph}). 
Note it is enough to use a 4-connected neighborhood.

Our two approaches from Figure~\ref{fig:gridVScomplex} and \cite{schoenemann-etal-ijcv-2012,elzehiry-grady-cvpr-2010,strandmark-kahl-emmcvpr-2011} all assign the same curvature costs for the patches in Figure~\ref{basepatches}.
Therefore, assuming that the global minimum can be found, they yield the same solution for $\pi/2$-precision curvature.

\subsection{Efficient Message Passing} \label{sec:speedup}
Since the number of labels can be very large when we have higher order factors it is essential to compute messages fast.
The messages sent during optimization has the form
\begin{equation}
m_{\alpha\beta}^t (X_\beta) = \min_{X_\alpha} (P_{\alpha\beta}(X_\alpha,X_\beta) + h(X_\alpha)),
\end{equation}
where $h$ is some function of super node label $X_\alpha$.

To compute the message we order the labels of both node $\alpha$ and $\beta$ into (disjoint) groups according to the assignments of the shared pixels.
The message values $m_{\alpha\beta}^t(X_\beta)$ for all the $X_\beta$ in the same group can now be found by searching for the smallest value of $h(X_\alpha)$ in the group consistent with the $X_\beta$.
The label order depends on the direction of the edge between $\alpha$ and $\beta$, however it does not change during optimization and can therefore be precomputed at startup.
The bottleneck is therefore searching the groups for the minimal value which can be done in linear time.

Note that this process does not require that all the possible patch assignments are allowed.
For larger patches (see Section~\ref{superduper}) some of the patch states may not be of interest to us and the corresponding labels can simply be removed.

Figure \ref{GRDcomparison} compares our approach for $\pi/2$-precision curvature to Generalized Roof Duality (GRD) \cite{kahl-strandmark-dam-2012}.
We used TRW-S \cite{kolmogorov-pami-2006} with $2\times 2$ patches and our efficient message passing scheme.
Our approach gives no duality gap.

\subsection{Lower Bounds using Trees}

As observed in \cite{elzehiry-grady-cvpr-2010} the $2\times2$ curvature interaction reduces to pairwise interactions between all the pixels in the patch.
In this discrete setting \eqref{curvatureint} reduces to \eqref{eq:clusterenergy}
where $\C$ consists of the edges of the underlying (8-connected) graph, see Figure \ref{fig:orggraph}. 
Therefore it could in principle be solved using roof duality (RD) \cite{rother-etal-cvpr-2007} or TRW-S \cite{kolmogorov-pami-2006}. (Note that this is only true for this particular neighborhood and the corresponding interaction penalty.)
However, it may still be useful to form super nodes. 
Methods such as \cite{kolmogorov-pami-2006} work by decomposing the problem into subproblems on trees and combining the results into a lower bound on the optimal solution. 
Sub-trees with super nodes are in general stronger than regular trees.

\begin{figure}[htb]
\begin{center}
\begin{tabular}{cc}
\includegraphics[width=20mm]{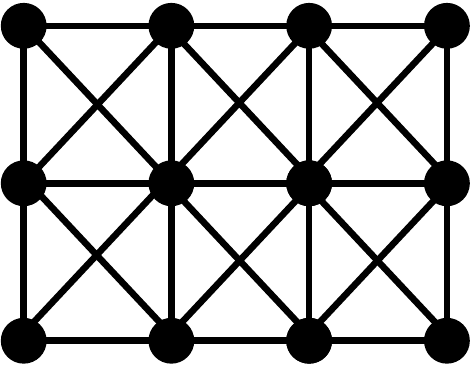} &
\includegraphics[width=20mm]{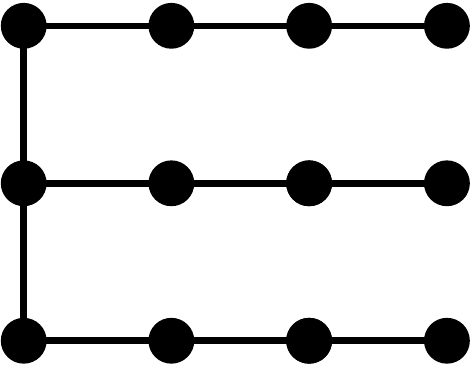}
\end{tabular}
\end{center}
\vspace{-5mm}
\caption{\small {\em Left}: 8-connected graph. {\em Right}: Sub-tree $\tree$.}
\label{fig:orggraph}
\end{figure}
\vspace{-5mm}
\begin{figure}[htb]
\begin{center}
\includegraphics[width=60mm]{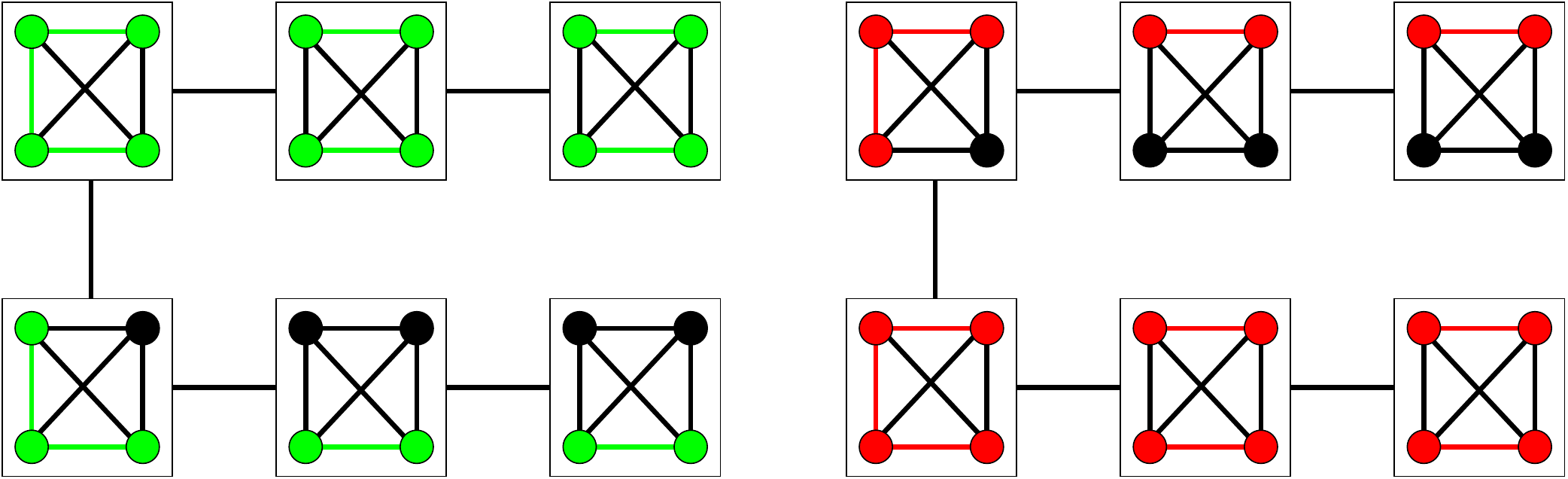} 
\end{center}\vspace{-3mm}
\caption{\small
$\tree_{2\times2}$ contains two copies (red and green) of $\tree$ .}
\label{fig:newtree}
\end{figure}

\begin{figure*}[htb]
\begin{center}
\begin{tabular}{cccc}
(a) & (b) & (c) & (d) \\
\includegraphics[width=0.2\linewidth,height=0.2\linewidth]{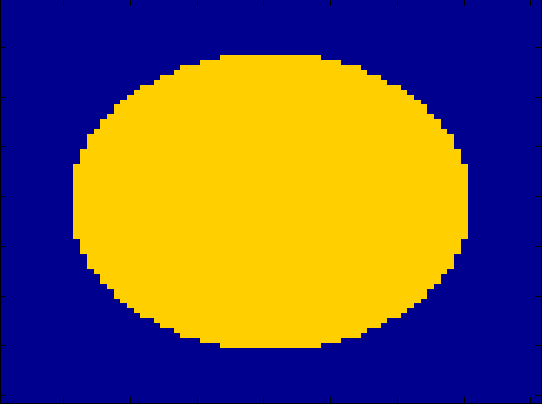} &
\includegraphics[width=0.2\linewidth,height=0.2\linewidth]{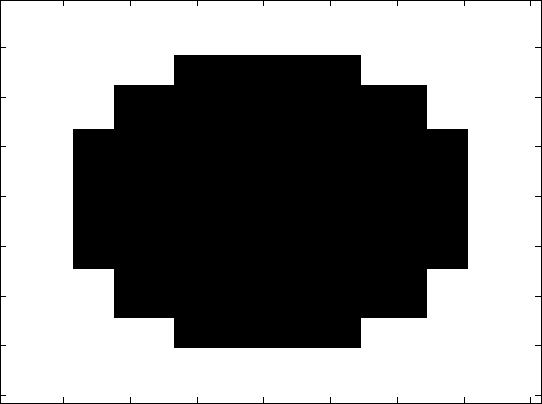} &
\includegraphics[width=0.2\linewidth,height=0.2\linewidth]{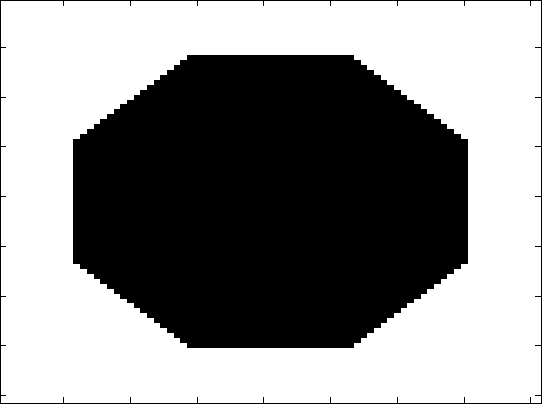} &
\includegraphics[width=0.2\linewidth,height=0.2\linewidth]{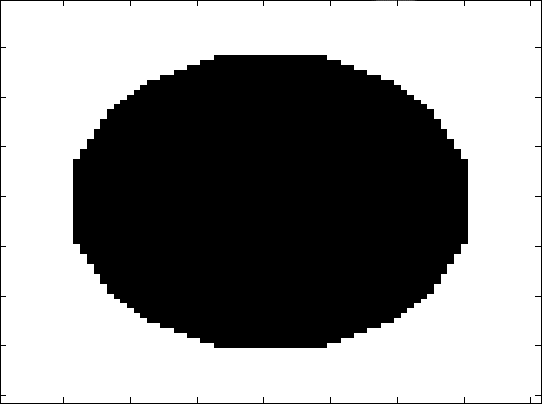} 
\end{tabular}
\end{center}\vspace{-5mm}
\caption{\small Segmentation results on a $81\times81$ pixel image using different patch sizes with same very high regularization weight ($\lambda = 20$). (a) - Data term, (b) - $2 \times 2$ patches clearly favors horizontal and vertical boundaries , (c) - $3 \times 3$ patches, favors directions that are multiples of $\pi/4$, (d) $5 \times 5$ patches, favors directions that are multiples of $\pi/8$.}
\label{circle}
\end{figure*}

Consider for example the sub-tree $\tree$ in Figure \eqref{fig:orggraph}. We can form a similar sub-tree $\tree_{2\times2}$ using the super nodes, see Figure \ref{fig:newtree}.
Note that the edges that occur twice within the super nodes have half the weight of the corresponding edges in Figure~\ref{fig:orggraph}.
Looking in the super nodes and considering the consistency edges we see that we can find two instances of $\tree$ within $\tree_{2 \times 2}$
(see Figure~\ref{fig:newtree}) both with weights 1/2 (here the edges that have weight 1 are allowed to belong to both trees). 
Hence if we view these two instances as independent and optimize them we get the same energy as optimization over $\tree$ would give.
In addition there are other edges present in $\tree_{2\times2}$, and therefore this tree gives a stronger bound.

In a similar way, we can construct even stronger trees by increasing the patch size further (event though the interactions might already be contained in the patches).
If we group super nodes in a sliding window approach we obtain a graph with $3\times3$ patches, see Figure~\ref{fig:graph33}. (We refer to the new nodes as super-duper nodes.)
If we keep repeating this process we will eventually end up enumerating the entire graph, so it is clear that the resulting lower bound will eventually approach the optimum.
\begin{figure}[htb]
\begin{center}
\includegraphics[width=40mm]{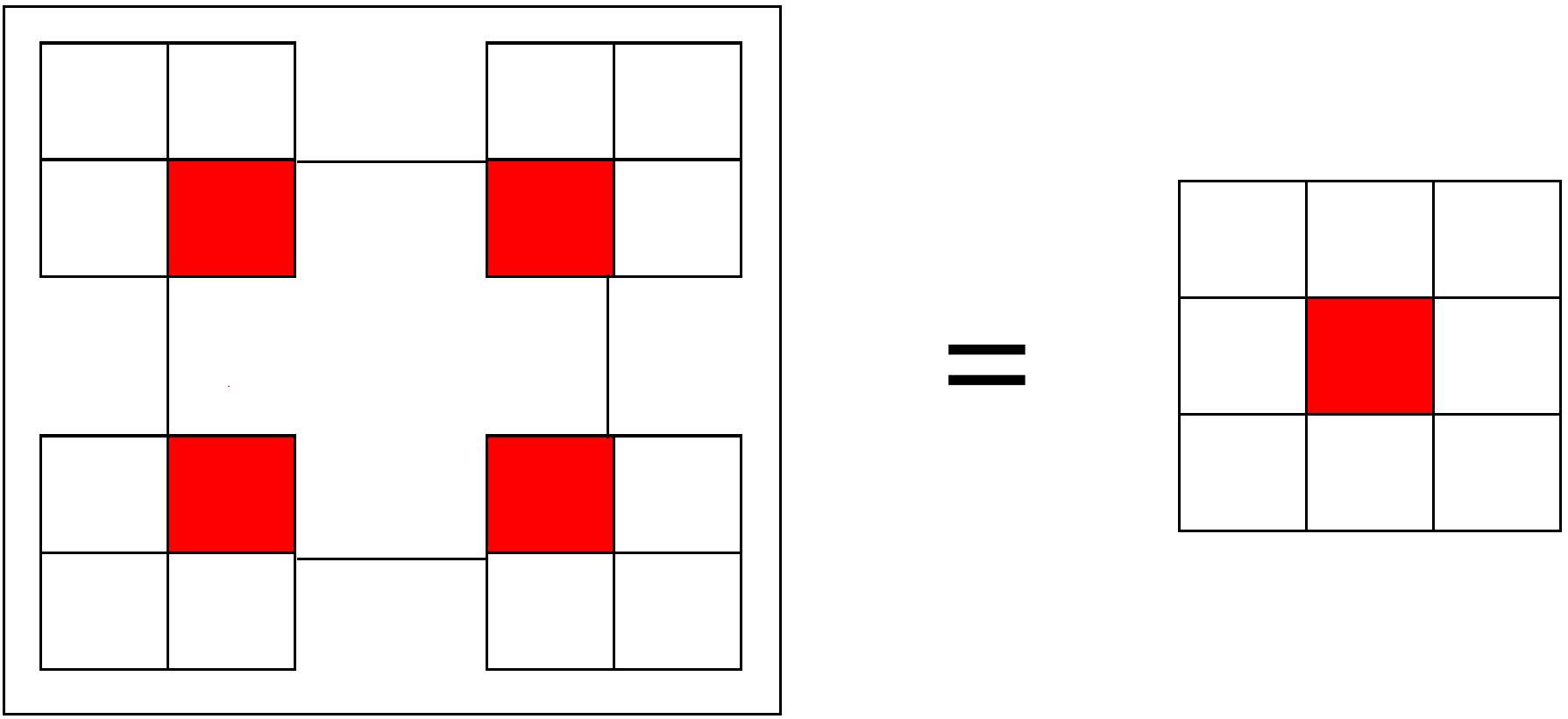}
\end{center}\vspace{-5mm}
\caption{\small Super-duper nodes containing patches of size $3 \times 3$ are created by grouping super nodes of size $2 \times 2$ in groups of $2 \times 2$ in a sliding window fashion.}
\label{fig:graph33}
\end{figure}

In Table~\ref{grady-approach} the same problem as in Figure~\ref{GRDcomparison} is solved using TRW-S without forming any super nodes.
\setlength{\tabcolsep}{4pt} 
\begin{table}[htb]
\small
\begin{center}
\begin{tabular}{lrr}
\toprule
$\lambda$ & Energy & Lower bound\\
\midrule
$2.5\cdot10^{-5}$ & 4677 & 4677\\
$2.5\cdot10^{-4}$ & 4680 & 4680\\
$2.5\cdot10^{-3}$ & 4709 & 4705\\
$2.5\cdot10^{-2}$ & 5441 & 4501\\
$2.5\cdot10^{-1}$ & 16090& -16039\\
$2.5\cdot10^{0}$ & 15940 & -19990\\ 
\bottomrule
\end{tabular}
\end{center}\vspace{-5mm}
\caption{\small Same as in Figure~\ref{GRDcomparison} without super nodes.}
\label{grady-approach}
\end{table}
\setlength{\tabcolsep}{6pt} 

\subsection{Application to $\pi/4$ and $\pi/8$ precision curvature}\label{superduper}

For patches of size $2 \times 2$ it is only possible to encourage horizontal and vertical boundaries. 
Indeed, along a diagonal boundary edge all $2\times2$ patches look like the second patch in Figure~\ref{basepatches}.
To make the model more accurate and include directions that are multiples of $\pi/4$ radians we will look at patches of a larger size, see Figure \ref{fig:gridVScomplex}(c).

For multiples of $\pi/4$ radians it is enough to have $3 \times 3$ patches and for $\pi/8$ radians we use patches of size $5 \times 5$.
However, the number of distinct patch-labels needed to encode the interactions (transitions between directions) is quite high. 
It is not feasible to determine their costs by hand.

To compute the specific label costs we generate representative windows of size slightly larger than the patch 
(for $3\times3$ patches we use $5 \times 5$ windows)  that contain either a straight line or a transition between two directions 
of known angle difference. 
From this window we can determine which super node assignments occur in the vicinity of different transitions.
We extract all the assignments and constrain their sum, as shown in Figure \ref{fig:gridVScomplex}, to be the known curvature of the window.
Furthermore, we require that the cost of each label is positive. 
If a certain label is not present in any of the windows we do not allow this assignment.
This gives us a set of linear equalities and inequalities for which we can find a solution (using linear programming).
The procedure gives 122 and 2422 labels for the $3 \times 3$ and $5 \times 5$ cases respectively.
More details are given in  Appendix~\ref{app:patchcost}.

Figures~\ref{circle} illustrates the properties of the different patch sizes.
Here we took an image of a circle and segmented it using the 3 types of approximating patches.
Note that there is no noise in the image, so simply truncating the data term would give the correct result. 
We segment this image using a very high regularization weight ($\lambda = 20$). 
In (b) horizontal and vertical boundaries are favored since these have zero regularization cost.
In (b) and (c) the number of directions with zero cost is increased and therefore the approximation improved with the patch size.
Figure~\ref{cameraman} shows real segmentations with the different patch sizes 
(with $\lambda = 1$).
Table~\ref{camtable} shows energies, lower bounds and execution times for a couple of methods.
Note that all methods except GTRW-S use our super node construction, here we use the single separator implementation \cite{GTRWS:arXiv12}. 
Thus, GTRW-S solves a weaker relaxation of the energy (this is confirmed by the numbers in Table~\ref{camtable}).
GTRW-S requires specifying all label combinations for each factor. For the patch assignments that we do not use to model curvature we specify a large cost (1000) to ensure that these are not selected. 
Furthermore, TRW-S and Loopy belief propagation (LBP) both use our linear time message computation. For comparison TRW-S (g) uses general message computation.
All algorithms have an upper bound of 10,000 iterations. In addition, for TRW-S and GTRW-S the algorithm converges if the lower bound stops increasing. 
For MPLP \cite{sontag-etal-uai-2008} we allow 10,000 iterations of clustering and we stop running if the duality gap is less than $10^{-4}$. 
Figure~\ref{timecurves} shows convergence plots for the $2\times2$ case.
\begin{figure*}[htb]
\begin{center}
\includegraphics[width=0.2\linewidth]{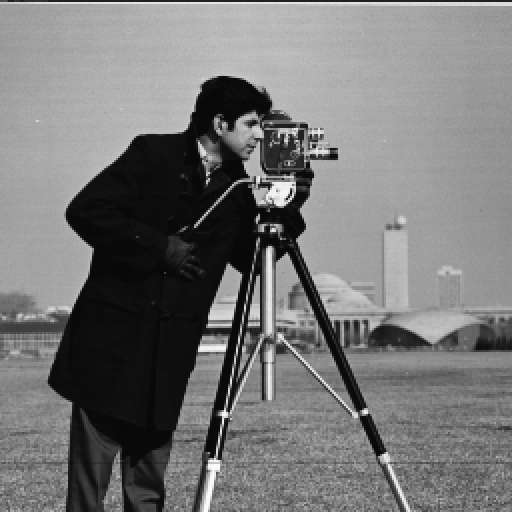}
\includegraphics[width=0.2\linewidth]{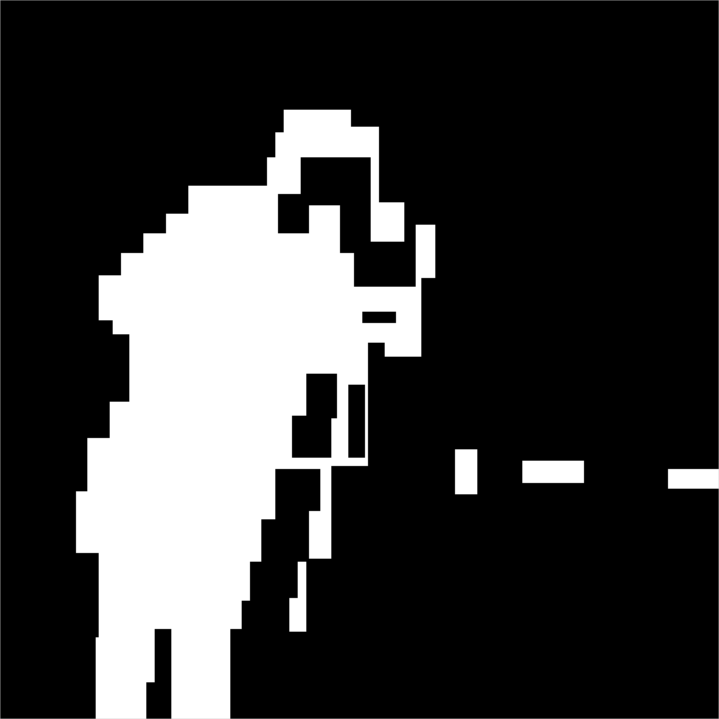} 
\includegraphics[width=0.2\linewidth]{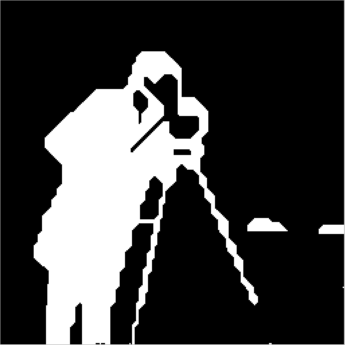} 
\includegraphics[width=0.2\linewidth]{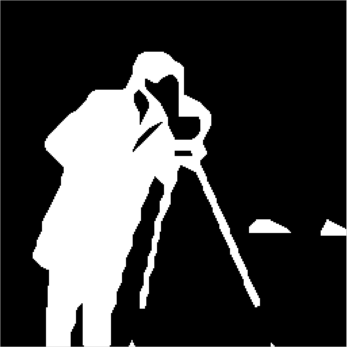} 
\end{center}\vspace{-4mm}
\caption{\small Segmentation of the camera man using (from left to right) $2 \times 2$, $3 \times 3$ and $5 \times 5$ patches with $\lambda = 1$.}
\label{cameraman}
\vspace{-4mm}
\end{figure*}

\begin{table*}[htb]
\small
\subfloat[$2\times 2$ patches.]{
	\begin{tabular}{lrrr}
	\toprule
	& Energy & Lower bound & Time (s) \\
	\midrule
	TRW-S &1749.4 & 1749.4 & 21\\	
    TRW-S (g) & 1749.4 & 1749.4 & 1580 \\	
	MPLP & 1749.4 & 1749.4 & 6584 \\
	LBP & 2397.7 &   & 1565 \\
	GTRW-S & 1867.9  & 1723.8  & 2189 \\
	\bottomrule
	\end{tabular}
} %
\hfill%
\subfloat[$3\times 3$ patches.]{
	\begin{tabular}{rrrr}
	\toprule
	Energy & Lower bound & Time (s) \\
	\midrule
	1505.7 & 1505.7 & 355 \\
	1505.7 & 1505.7 & 41503 \\

	$\ddagger$ & $\ddagger$ & $\ddagger$  \\
		$*$ &  & 3148 \\	
		99840 & 1312.6 & 10785 \\
	\bottomrule
	\end{tabular}
} %
\subfloat[$5\times 5$ patches.]{
	\begin{tabular}{rrrr}
	\toprule
	Energy & Lower bound & Time (s) \\
	\midrule
	1417.0 & 1416.6 & 8829  \\	
	$\ddagger$ & $\ddagger$ &  $\ddagger$ \\
	$\ddagger$ & $\ddagger$  &  $\ddagger$ \\
	$*$ &  & $157532$    \\
	$\ddagger$  &  $\ddagger$  & $\ddagger$  \\
	\bottomrule
	\end{tabular}
}
\caption{\small Cameraman ($256 \times 256$ pixels) with $\lambda = 1$ run with with different path sizes. Resulting segmentation can be seen in Figure~\ref{cameraman}.
$(\ddagger)$ Creating the problem instance not feasible due to excessive memory usage. $(*)$ Inconsistent labeling.}
\label{camtable}
\end{table*}

\begin{figure}[htb]
\graphicspath{{.}}
{\scriptsize
\def\svgwidth{1\linewidth}
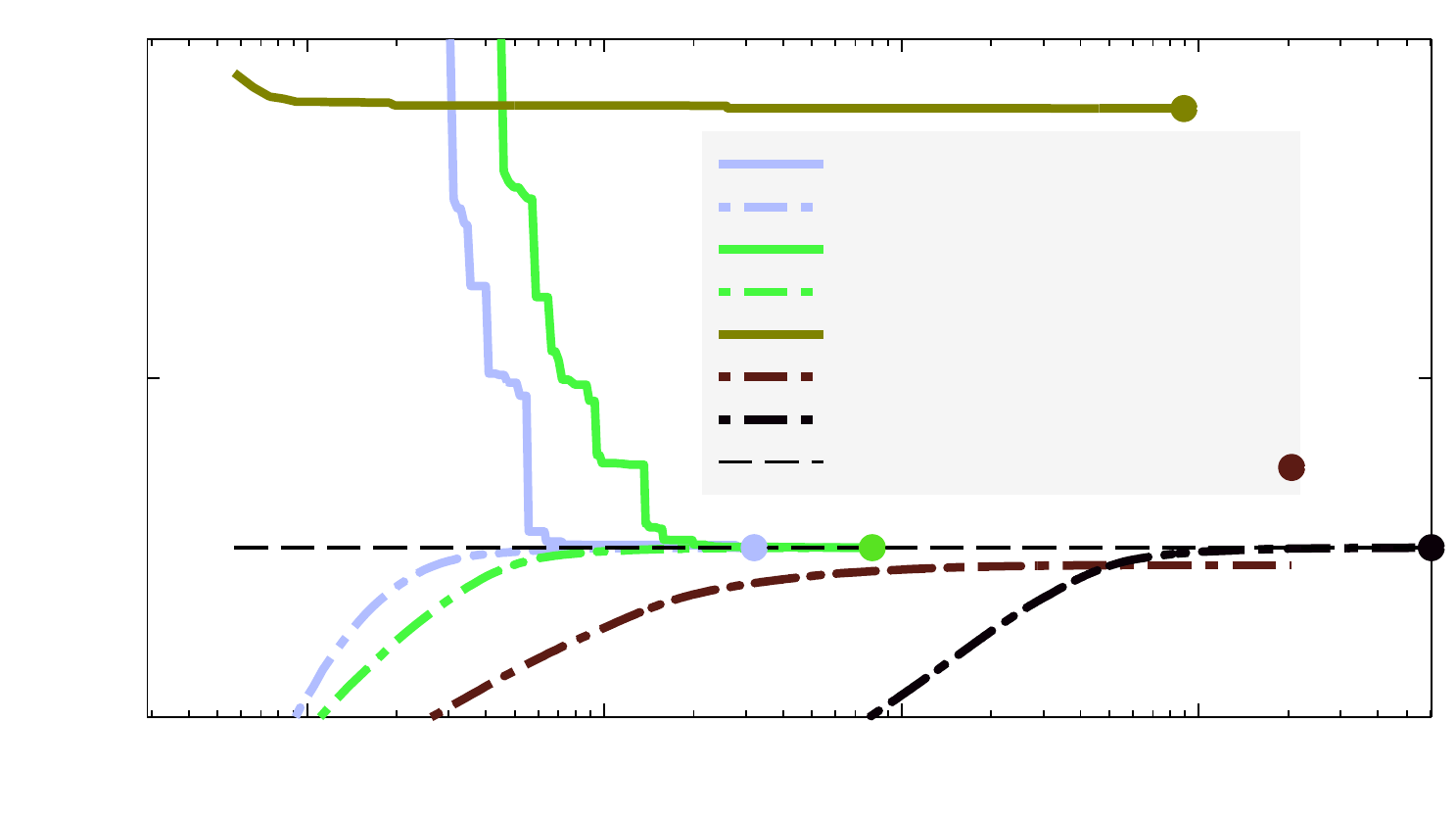
}
\vspace{-5mm}
\caption{\small Logarithmic time plot for energy and lower bound over time for the $2\times2$ experiment in Table~\ref{camtable}. For MPLP and GTRW-S we only show the final energy as a dot.}
\label{timecurves}
\end{figure}

\section{Other Applications}\label{sec:exp}
Our framework does not only work for curvature but applies to a general class of problems. 
In this section we will test our partial enumeration approach for other problems than curvature regularization.

\subsection{Binary Deconvolution}
Figure \ref{deconvolution} (a) shows an image convolved with a $3 \times 3$ mean value kernel with additional noise added to it.
The goal is to recover the original (binary) image. We formulate the energy as outlined in \cite{raj-zabih-iccv-2005}.
The resulting interactions are pairwise and Figure \ref{deconvolution} (b) shows the solution obtained using RD, here the gray pixels are unlabeled. 
For comparison we also plotted the solution obtained when solving the the same problem as RD but with 
\cite{sontag-etal-uai-2008} (c) and TRW-S (d). 
For these methods there are substantial duality gaps.
In contrast (e) shows the solution obtained when forming super nodes with patches of size $3 \times 3$ and then applying TRW-S.
Here there is no duality gap, so the obtained solution is optimal. 

\begin{figure*}[htb]
\vspace{-5mm}
\centering
\def\iccvwidth{0.19\linewidth}%
\hfill
\subfloat[Noisy image]{%
\includegraphics[width=\iccvwidth]{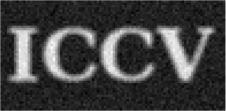}
}\hfill
\subfloat[RD (379.44)]{%
\includegraphics[width=\iccvwidth]{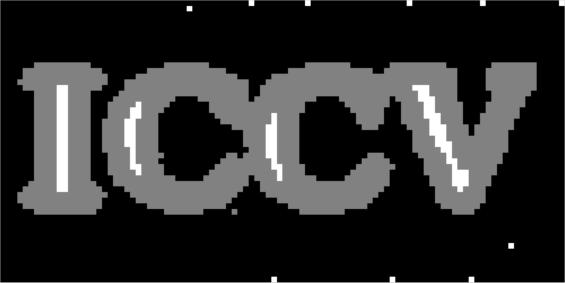}
}\hfill
\subfloat[MPLP (324.07)]{%
\includegraphics[width=\iccvwidth]{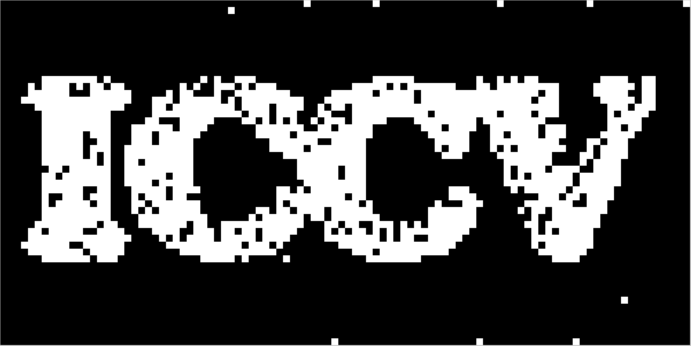}
}
\hfill
\subfloat[TRW-S (36.44)]{%
\includegraphics[width=\iccvwidth]{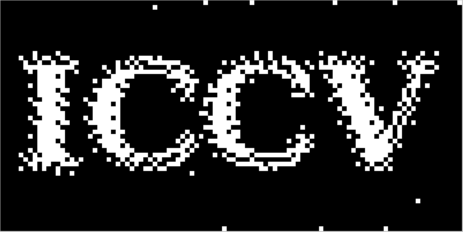}
}\hfill
\subfloat[TRW-S~patches (12.11)]{%
\includegraphics[width=\iccvwidth]{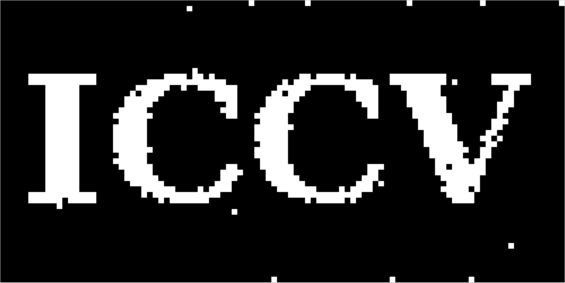}
}\hfill

\caption{\small Binary deconvolution results (with energy). 
(a) Noisy image. 
(b) Gray pixels are unlabeled.  Duality gap:  $477.78$ (unlabeled set to 0)  
(c) Duality gap: $422.40$, maximum 10,000 clusterings.  
(d) Duality gap: $134.77$. 
(e) Duality gap: $10^{-14}$.
}
\label{deconvolution}
\vspace{-4mm}
\end{figure*}

\def\cones{\texttt{Cones}}

\subsection{Stereo}
\setlength{\tabcolsep}{4pt} 
\begin{figure*}[htb]
	 \begin{centering}

		\subfloat[$\ell_1$ regularization.]{
		 \small

		 \begin{tabular}{l@{}cccr}
		\toprule
		 & Energy & \hspace{-2mm} Lower bound & Relative gap & Time (s) \\ 
		 \midrule
		 Our & \hspace{-2mm}$1.4558\cdot 10^{10}$ &  $1.4558 \cdot 10^{10}$ & $1.4471 \cdot 10^{-14}$ &   $315.3342$\\
		 RD &  \hspace{-2mm} $1.4637\cdot 10^{10}$ & $1.4518 \cdot 10^{10}$ & $9.3694 \cdot 10^{-3}$ & $1.9216$\\
		 Our/RD &   $0.9958$ &   $1.0019$ & $4.3825\cdot 10^{-13}$ & $180.6859$ \\
		 \bottomrule
		 \end{tabular}
		}
		\subfloat[$\ell_2$ regularization.]{
		 \small

		 \begin{tabular}{l@{}cccr}
		\toprule
		 & Energy & \hspace{-2mm} Lower bound & Relative gap & Time (s) \\ 
		 \midrule
		Our &  \hspace{-2mm}$1.3594 \cdot 10^{10}$ &  $1.3594 \cdot 10^{10}$ &  $2.0394 \cdot 10^{-14}$ &  
  $428.2557$ \\
		 RD &  \hspace{-2mm}$1.5165 \cdot 10^{10}$ &  $1.0484 \cdot 10^{10}$ &  $0.5756$ &  $4.6479$ \\
		Our/RD &  $0.9092$ &   $1.1652$ &    $6.0910\cdot 10^{-15}$ & $111.8597$ \\
		 \bottomrule
		 \end{tabular}
		}\vspace{-2mm}
	\captionof{table}{\small Averaged stereo results on $\cones$~sequence. Relative gap is defined as (Energy-Lower bound)/Lower bound. (a) For $\ell_1$ regularization RD left 24\% of the variables unlabeled. "Improve" lowered the average energy for RD to $1.4609\cdot 10^{10}$. (b) For $\ell_2$ regularization RD left 64\% of the variables unlabeled. "Improve" lowered the average energy for RD to $1.4392 \cdot 10^{10}$.}
	\label{tab:stereo_result}
	 \end{centering}
\vspace{-4mm}
\end{figure*}
\setlength{\tabcolsep}{6pt} 

In this section we optimize the energies occurring in Woodford \etal \cite{woodford2009}.
The goal is to find a dense disparity map for a given pair of images. 
The regularization of this method penalizes second order derivatives of the disparity map, either using a truncated $\ell_1$- or $\ell_2$-penalty.
The 2nd derivative is estimated from three consecutive disparity values (vertically or horizontally), thus resulting in triple interactions.

To solve this problem \cite{woodford2009} uses fusion moves \cite{lempitsky-etal-pami-2010} where proposals are fused together to lower the energy.
To compute the move \cite{woodford2009} first reduces the interactions (using auxiliary nodes) 
and applies Roof duality (RD) \cite{rother-etal-cvpr-2007}. 
In contrast we decompose the problem into patches of size $3\times3$, that contain entire triple interactions.
Since the interactions will occur in as many as three super nodes we weight these so that the energy does not change.

Table \ref{tab:stereo_result} shows the results for the \cones~dataset from \cite{middlebury2} when fusing "SegPln" proposals \cite{woodford2009}. Starting from a randomized disparity map we fuse all the proposals.
To ensure that each subproblem is identical for the two approaches, we feed the solution from RD into our approach
before running the next fusion.  We also tested the "improve" heuristic  \cite{rother-etal-cvpr-2007} which gave a reduction in duality gap for RD.
Running "probe" \cite{rother-etal-cvpr-2007} instead of improve is not feasible due to the large number of unlabeled variables.

We also compared the final energies when we ran the methods independent of each other (not feeding solutions into our approach). For $\ell_1$ regularization our solution has 0.82\% lower energy than that of RD with "improve" and for $\ell_2$ regularization our solution is 7.07\% lower than RD with "improve".

{\footnotesize
\bibliographystyle{ieee}
\bibliography{newlib}
}

\appendix
\section{LP relaxation}\label{app:lprelaxation}
In this appendix we relate our proposed partial enumeration approach to other methods, that optimize higher order energy factors,
by analyzing the particular LP relaxation being solved. 

\subsection{Consistent Labelings}
We consider energy functions of the form
\begin{equation}
E(\bx)=\sum_{\alpha\in\calC} E_\alpha(\bx_\alpha)
\label{eq:E}
\end{equation}
where letter $\alpha$ denotes a {\em cluster} (i.e.\ a set of variables in $V$) , $\bx_\alpha$ is the restriction of $\bx$ to $\alpha$,
and $\calC$ is some set of clusters. We assume that variable  $x_i$ for $i\in V$ takes values
in some discrete set of labels $\calL_i$. Where appropriate, we will treat $\alpha$ as an {\em ordered} sequence of nodes
(e.g.\ with respect to some total order on $V$).
For a cluster $\alpha=(i_1,\ldots,i_k)$
we denote $\calL_{\alpha}=\calL_{i_1}\times\ldots\times\calL_{i_k}$. We have, in particular, $\bx\in\calL_V$.

Let us select another set of clusters $\calV$ with the following property:
for every $\alpha\in\calC$ there exists $\beta\in\calV$ such that $\alpha\subseteq\beta$.
Clearly, we can equivalently rewrite energy~\eqref{eq:E} as an energy with unary and pairwise terms:
\begin{equation}
\hat E(\bX)=\sum_{\alpha\in\calV} U_\alpha(X_\alpha) + \sum_{\{\alpha,\beta\}\in\calE} P_{\alpha\beta}(X_\alpha,X_\beta)
\label{eq:hatE}
\end{equation}
Here $X_\alpha$ is a discrete variable that takes values in $\calL_\alpha$.
The pairwise potential $P_{\alpha\beta}$ is defined as follows:
$P_{\alpha\beta}(X_\alpha,X_\beta)=0$ if variables $X_\alpha$ and $X_\beta$ agree on the overlap $\alpha\cap\beta$,
and $P_{\alpha\beta}(X_\alpha,X_\beta)=+\infty$ otherwise. It remains to specify how to choose the set of edges
$\calE$. One possibility would be to select all pairs $\{\alpha,\beta\}$ such that $\alpha,\beta\in\calV$ and $\alpha\cap\beta\ne\varnothing$.
However, in some cases we may be able to choose a smaller set.
\begin{proposition}\label{prop:consistency}
Suppose that graph $(\calV,\calE)$ satisfies the following property: 
\begin{itemize}
\item For every pair of distinct 
clusters $\alpha,\beta\in\calV$ with $\gamma=\alpha\cap\beta\ne\varnothing$
the subgraph $(\calV_{\gamma},\calE_{\gamma})$ of $(\calV,\calE$) induced
by the set of nodes $\calV_{\gamma}=\{\alpha\in\calV\:|\:\gamma\subseteq\alpha\}$
is connected.
\end{itemize}
Then labeling $\bX$ is consistent\footnote{We say that $\bX$ is consistent
if there exists a labeling $\bx\in \calL_V$ such that $X_\alpha=\bx_\alpha$ for all $\alpha\in\calV$} iff
$\sum_{\{\alpha,\beta\}\in\calE} P_{\alpha\beta}(X_\alpha,X_\beta)=0$.
\label{prop:partialE}
\end{proposition}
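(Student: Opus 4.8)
The plan is to prove the two implications separately, after one simplifying observation. Since each $P_{\alpha\beta}$ takes only the values $0$ and $+\infty$, the sum $\sum_{\{\alpha,\beta\}\in\calE} P_{\alpha\beta}(X_\alpha,X_\beta)$ vanishes if and only if \emph{every} edge is satisfied, i.e.\ $X_\alpha$ and $X_\beta$ agree on the overlap $\alpha\cap\beta$ for all $\{\alpha,\beta\}\in\calE$. Writing $(X_\alpha)_i$ for the label that $X_\alpha$ assigns to a node $i\in\alpha$, ``agreement on $\alpha\cap\beta$'' means $(X_\alpha)_i=(X_\beta)_i$ for every $i\in\alpha\cap\beta$. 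The forward implication is then immediate and uses no hypothesis on the graph: if $\bX$ is consistent with witness $\bx\in\calL_V$, then $X_\alpha=\bx_\alpha$ and $X_\beta=\bx_\beta$ are restrictions of one global labeling, so they coincide on the shared coordinates $\alpha\cap\beta$; hence every edge is satisfied and the sum is zero.

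The substance lies in the converse, where the connectivity hypothesis enters. Assuming every edge is satisfied, I would build a global labeling $\bx$ by setting $x_i:=(X_\alpha)_i$ for an arbitrary $\alpha\in\calV$ with $i\in\alpha$ (nodes lying in no cluster of $\calV$ may be labeled freely). The only point to verify is that this is well defined, independent of the chosen cluster. So fix $i$ and two distinct clusters $\alpha,\beta\in\calV$ containing $i$, and set $\gamma=\alpha\cap\beta$; since $i\in\gamma$ we have $\gamma\ne\varnothing$, so the hypothesis applies to the pair $(\alpha,\beta)$ and guarantees that the induced subgraph $(\calV_\gamma,\calE_\gamma)$ is connected. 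Both $\alpha$ and $\beta$ lie in $\calV_\gamma$, so there is a path $\alpha=\delta_0,\delta_1,\ldots,\delta_m=\beta$ using only edges of $\calE_\gamma\subseteq\calE$. Every node $\delta_j$ satisfies $\gamma\subseteq\delta_j$, hence $i\in\delta_j$, and in particular $i\in\delta_j\cap\delta_{j+1}$ for each consecutive pair. Because each edge $\{\delta_j,\delta_{j+1}\}$ is satisfied, $(X_{\delta_j})_i=(X_{\delta_{j+1}})_i$, and chaining these equalities along the path gives $(X_\alpha)_i=(X_\beta)_i$. Thus $x_i$ is well defined, and taking the representative cluster to be $\alpha$ itself shows $\bx_\alpha=X_\alpha$ for every $\alpha\in\calV$, so $\bX$ is consistent.

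I expect the main obstacle to be selecting the correct set $\gamma$ on which to invoke the hypothesis. The naive move is to apply connectivity to the singleton $\{i\}$, but the proposition only asserts connectivity of $\calV_\gamma$ for $\gamma$ that genuinely arise as an intersection of two clusters. The clean choice is $\gamma=\alpha\cap\beta$, and the key leverage is the containment $\gamma\subseteq\delta_j$ along the whole path, which forces $i$ into every pairwise overlap encountered and lets the agreement relation propagate transitively from $\alpha$ to $\beta$. Everything else, namely the $\{0,+\infty\}$ reduction and the forward direction, is routine.
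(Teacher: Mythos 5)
Your proposal is correct and follows essentially the same argument as the paper: the forward direction is immediate, and for the converse you define $x_i:=(X_\alpha)_i$, then establish well-definedness by taking $\gamma=\alpha\cap\beta$, invoking connectivity of $(\calV_\gamma,\calE_\gamma)$ to obtain a path, and propagating agreement on $i$ along its edges by induction. No substantive differences.
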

\begin{proof}
One direction is trivial: if $\bX$ is consistent then each pairwise term $P_{\alpha\beta}(X_\alpha,X_\beta)$
is zero. 

Suppose that $P_{\alpha\beta}(X_\alpha,X_\beta)=0$ for all $\{\alpha,\beta\}\in\calE$.
Let us define labeling $\bx\in\calL_V$ as follows: for a node $p\in V$
select cluster $\alpha\in\calV$ with $p\in\alpha$ and set $x_p:=(X_\alpha)_p$.
We need to show that this definition does not depend on the exact choice of $\alpha$.
Consider two distinct clusters $\alpha,\beta\in\calV$ with $p\in\alpha\cap\beta$.
By the assumption of the proposition, nodes $\alpha,\beta$ are connected by a path
 $\alpha_0,\ldots,\alpha_k$ in graph $(\calV_\gamma,\calE_\gamma)$ where $\gamma=\alpha\cap\beta$, $p\in\gamma$.
For each $i$ we have $P_{\alpha_i\alpha_{i+1}}(X_{\alpha_i},X_{\alpha_{i+1}})=0$,
and so labelings $X_{\alpha_i}$ and $X_{\alpha_{i+1}}$ agree on $p\in\alpha_i\cap\alpha_{i+1}$.
An induction argument then shows that $X_\alpha$ and $X_\beta$ agree on $p$, thus proving the claim.

Showing that the constructed labeling $\bx$ is consistent with $X_\alpha$ for each $\alpha\in\calV$
is now straightforward.
%
%
\end{proof}

\begin{figure}[htb]
\begin{center}
\begin{pgfpicture}{0cm}{0cm}{5cm}{3cm}
\pgfputat{\pgfxy(0,0.35)}{\includegraphics[width=4.8cm]{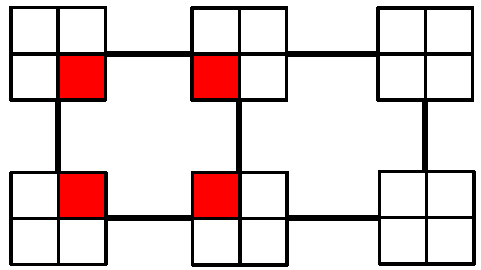}}
	\pgfputat{\pgfxy(0.5,3.15)}{$\alpha$}
	\pgfputat{\pgfxy(2.2,3.15)}{$\alpha_1$}
	\pgfputat{\pgfxy(2.2,0)}{$\beta$}
\end{pgfpicture}	
\end{center}
\caption{Example of proposition~\ref{prop:consistency} for the case of $2\times2$ patches. The intersection $\alpha \cap \beta$ is the red pixel which is also contained in $\alpha_1$.
Therefore no diagonal consistency edge between $\alpha$ and $\beta$ is needed.}
\end{figure}

\subsection{Analysis of LP relaxations}
When we apply method like TRW-S to energy~\eqref{eq:hatE}, we essentially solve a higher-order relaxation of the original energy~\eqref{eq:E}.
Many methods have been proposed in the literature for solving higher-order relaxations, 
e.g.~\cite{sontag-etal-uai-2008,komodakis-etal-cvpr-2009,Meltzer:UAI09,Werner:PAMI10,GTRWS:arXiv12} to name just a few.
To understand the relation to these methods, we will analyze which specific relaxation is solved by our approach.
We will then argue that the complexity of message passing in our scheme roughly matches that
of other techniques that solve a similar relaxation. 
\footnote{\label{foot:specialized} Message
passing techniques require the minimization of expressions of the form
 $E_\alpha(\bx_\alpha)+\ldots$ where dots denote lower-order factors.
Here we assume that this expression is minimized by going through all possible labelings $\bx_\alpha$.
This would hold if, for example, $E_\alpha(\cdot)$ is represented by a table (which is the case
with curvature). Some terms $E_\alpha(\cdot)$ used in practice have a special structure
that allow more efficient computations; in this case other techniques may have a better
complexity. One example is {\em cardinality-based potentials}~\cite{Tarlow:AISTATS10} which can have a very high order.}
In practice, the choice of the optimization method is often motivated by the ease of implementation;
we believe that our scheme has an advantage in this respect, and thus may be preferred by practitioners.

\myparagraph{Family of LP relaxations}
We use the framework of Werner~\cite{Werner:PAMI10} who described a large family of LP relaxations.
Each relaxation is specified by two sets, $\calF$ and $J$. 
Set $\calF$ contains clusters $\alpha\subseteq V$, with $\calC\subseteq\calF$. For each $\alpha\in\calF$ and for each possible labeling $\bx_\alpha\in\calL_\alpha$
an indicator variable $\tau_{\alpha}(\bx_\alpha)\in\{0,1\}$ is introduced; the integrality constraint is then relaxed to $\tau_{\alpha}(\bx_\alpha)\in[0,1]$.
Set $J$ contains pairs $(\alpha,\beta)$ with $\alpha,\beta\in\calF$, $\beta\subset\alpha$;
this means that $(\calF,J)$ is a directed acyclic graph. For each edge $(\alpha,\beta)\in J$
we add a {\em consistency}, or a {\em marginalization} constraint between $\alpha$ and $\beta$.
The resulting relaxation is given by
\begin{subequations}\label{eq:WernerLP}
\begin{eqnarray}
 &&\hspace{-90pt}\min \;\;\; \sum_{\alpha\in\calC}\sum_{\bx_\alpha}E_\alpha(\bx_\alpha)\tau_{\alpha}(\bx_\alpha) \label{eq:WernerLP:a}\\
\mbox{s.t.~~~~~}\sum_{\bx_\alpha} \tau_\alpha(\bx_\alpha)&\!\!\!\!=\!\!\!\!&1 \hspace{35pt}\forall \alpha\in\calF \label{eq:WernerLP:b}\\
\sum_{\bx_{\alpha}:\bx_\alpha\sim\bx_\beta}\!\!\!\!\!\! \tau_\alpha(\bx_\alpha)&\!\!\!\!=\!\!\!\!&\tau_\beta(\bx_\beta)\hspace{11pt}\forall(\alpha,\beta)\in J,\forall \bx_\beta\label{eq:WernerLP:c}\\
 \tau_\alpha(\bx_\alpha)&\!\!\!\!\ge\!\!\!\!&0 \hspace{36pt}\forall \alpha\in\calF,\forall\bx_\alpha \label{eq:WernerLP:d}
\end{eqnarray}
\end{subequations}
where notation $\bx_\alpha\sim\bx_\beta$ means that labelings $\bx_\alpha$ and $\bx_\beta$ are consistent on the overlap area.

Adding more edges to $J$ gives more constraints and thus leads to the same or tighter relaxation.
The simplest choice is to set $J=\{(\alpha,\{i\})\:|\:\alpha\in \calF,i\in\alpha\}$;
in \cite{GTRWS:arXiv12} this is called a {\em relaxation with singleton separators}.
We will show next that our approach uses a larger set $J$;
experimental results in \cite{GTRWS:arXiv12} and in our paper confirm that
this gives a tighter relaxation.

\paragraph{Partial enumeration}
The main optimization technique studied in this paper is to convert energy~\eqref{eq:E} to energy~\eqref{eq:hatE} and then
apply TRW-S algorithm~\cite{kolmogorov-pami-2006} to the latter.
It is known \cite{Wainwright:trw_max} that TRW techniques for pairwise energies attempt to solve a certain LP relaxation
 known as {\em Schlesinger's LP}~\cite{Werner:PAMI07}.
In order to understand the relation to previous techniques, we will formulate
the resulting relaxation in terms of the {\em original} energy~\eqref{eq:E}.
\begin{definition}
 Consider a high-order energy~\eqref{eq:E} with the set of factors $\calC$.
The {\em maximal LP relaxation} of~\eqref{eq:E} is the relaxation~\eqref{eq:WernerLP} 
with $\calF=\{\alpha\:|\:\alpha\subseteq\hat\alpha\mbox{ for some }\hat\alpha\in\calC\}$
and $J=\{(\alpha,\beta)\:|\:\alpha,\beta\in\calF,\beta\subset\alpha\}$.
\end{definition}
\begin{theorem}
Suppose that $\calV=\calC$ and graph $(\calV,\calE)$ satisfies the precondition of proposition~\ref{prop:consistency}.
Then Schlesinger's LP relaxation for energy~\eqref{eq:hatE} is equivalent to the maximal LP relaxation
of energy~\eqref{eq:E}.
\label{th:Jmain}
\end{theorem}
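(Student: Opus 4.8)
The plan is to write out both linear programs explicitly and exhibit a value-preserving correspondence between their feasible regions, from which equivalence of the optimal values follows. Applying TRW-S to the pairwise energy~\eqref{eq:hatE} solves Schlesinger's LP on the graph $(\calV,\calE)$ with $\calV=\calC$: it has node pseudomarginals $\mu_\alpha(\bx_\alpha)$, one per super node $\alpha\in\calC$ and label $\bx_\alpha\in\calL_\alpha$, and edge pseudomarginals $\mu_{\alpha\beta}(\bx_\alpha,\bx_\beta)$ for $\{\alpha,\beta\}\in\calE$, subject to normalization and the usual edge-to-node marginalization. Because $P_{\alpha\beta}=+\infty$ off the consistent pairs, every edge pseudomarginal is forced to be supported on pairs agreeing on $\alpha\cap\beta$, and the pairwise part of the objective vanishes; thus the objective reduces to $\sum_{\alpha\in\calC}\sum_{\bx_\alpha}E_\alpha(\bx_\alpha)\mu_\alpha(\bx_\alpha)$, which has exactly the form of~\eqref{eq:WernerLP:a}. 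I would therefore identify the node pseudomarginals $\mu_\alpha$ ($\alpha\in\calC$) with the variables $\tau_\alpha$ ($\alpha\in\calC$) of the maximal relaxation and show that the two feasible regions map onto one another with equal objective.

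For the first direction, given a feasible point $\{\tau_\alpha\}_{\alpha\in\calF}$ of the maximal relaxation I would keep $\{\tau_\alpha\}_{\alpha\in\calC}$ as the node pseudomarginals and build edge pseudomarginals by the conditional-product coupling: for $\{\alpha,\beta\}\in\calE$ with $\gamma=\alpha\cap\beta$ set $\mu_{\alpha\beta}(\bx_\alpha,\bx_\beta)=\tau_\alpha(\bx_\alpha)\tau_\beta(\bx_\beta)/\tau_\gamma(\bx_\gamma)$ when $\bx_\alpha,\bx_\beta$ agree on $\gamma$ with common value $\bx_\gamma$ and $\tau_\gamma(\bx_\gamma)>0$, and $0$ otherwise. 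Since $\gamma\subseteq\alpha$ gives $\gamma\in\calF$ and both $(\alpha,\gamma),(\beta,\gamma)\in J$ in the maximal relaxation, the marginalization constraints~\eqref{eq:WernerLP:c} for these pairs give $\sum_{\bx_\beta}\mu_{\alpha\beta}(\bx_\alpha,\bx_\beta)=\tau_\alpha(\bx_\alpha)$ (and symmetrically), while the case $\tau_\gamma(\bx_\gamma)=0$ forces $\tau_\alpha(\bx_\alpha)=0$ by nonnegativity, so nothing is lost. This produces a feasible Schlesinger point with the same objective.

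The converse direction is the crux, and is where the connectedness precondition of Proposition~\ref{prop:consistency} is essential. Given a feasible Schlesinger point I would define, for each $\beta\in\calF$, the lower-order variable $\tau_\beta(\bx_\beta)$ by marginalizing $\mu_\alpha$ onto $\beta$ for some factor $\alpha\in\calC$ with $\beta\subseteq\alpha$. The only thing to check --- and the main obstacle --- is that this is independent of the chosen $\alpha$. Given two factors $\alpha_1,\alpha_2\supseteq\beta$ put $\delta=\alpha_1\cap\alpha_2\supseteq\beta$; it suffices that $\mu_{\alpha_1}$ and $\mu_{\alpha_2}$ have the same marginal on $\delta$, after which a further marginalization to $\beta$ finishes. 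Here I would first observe that for any edge $\{\alpha,\alpha'\}\in\calE$ the consistency-supported edge pseudomarginal forces the marginals of $\mu_\alpha$ and $\mu_{\alpha'}$ on the overlap $\alpha\cap\alpha'$ to coincide. Since $\delta$ is an intersection of two clusters, the precondition guarantees that the subgraph $(\calV_\delta,\calE_\delta)$ is connected; walking a path from $\alpha_1$ to $\alpha_2$ inside $\calV_\delta$, every edge lies between clusters each containing $\delta$, so along each step the marginals on $\delta$ agree, and transitivity yields equality of the $\delta$-marginals of $\mu_{\alpha_1}$ and $\mu_{\alpha_2}$.

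Finally, with $\{\tau_\beta\}_{\beta\in\calF}$ well defined, I would verify the remaining maximal-relaxation constraints directly: normalization of each $\tau_\beta$ is inherited from that of the factor marginal it is computed from, and for any $(\alpha,\beta)\in J$ with $\beta\subset\alpha\subseteq\hat\alpha\in\calC$ the constraint~\eqref{eq:WernerLP:c} follows from transitivity of marginalization, since $\tau_\beta$ is the marginal of $\tau_{\hat\alpha}$ on $\beta$ and hence also the marginal of $\tau_\alpha$ on $\beta$. Because the objective again involves only $\{\tau_\alpha\}_{\alpha\in\calC}=\{\mu_\alpha\}_{\alpha\in\calC}$, the value is preserved. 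Combining both directions shows that every feasible point of one relaxation yields a feasible point of the other with the same objective, so the two relaxations have equal optimal value --- the claimed equivalence.
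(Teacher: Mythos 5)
Your proposal is correct and follows essentially the same route as the paper's proof: both directions are established via the same cost-preserving maps, namely marginalizing the super-node pseudomarginals onto sub-clusters (with well-definedness secured by walking a path in the connected subgraph $(\calV_\gamma,\calE_\gamma)$) in one direction, and the conditional product coupling $\tau_\alpha\tau_\beta/\tau_{\alpha\cap\beta}$ for the edge pseudomarginals in the other.
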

We will also prove the following result.
\begin{theorem}
Suppose that graph $(\calF,J)$ satisfies the following for any non-empty subset $\beta\subseteq \alpha$ where $\alpha\in\calF$:
\begin{itemize}
\item The subgraph $(\calF_\beta , J_\beta )$ of $(\calF, J)$ induced by the set of clusters 
$\calF_\beta = \{\alpha \in \calF\: | \:\beta \subseteq \alpha\}$
is connected.
\end{itemize}
Then relaxation~\eqref{eq:WernerLP} with graph $(\calF,J)$ is equivalent to the maximal LP relaxation of energy~\eqref{eq:E}.
\label{th:Jmain'}
\end{theorem}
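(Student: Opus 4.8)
The plan is to lift the combinatorial argument of Proposition~\ref{prop:consistency} to the level of fractional (LP) solutions: the connectivity hypothesis on $(\calF,J)$ will let me propagate marginalization constraints along paths, so that every constraint of the maximal relaxation is already implied by the ones in $J$. Throughout I take $\calF$ to be the maximal cluster set $\{\alpha\mid\alpha\subseteq\hat\alpha\text{ for some }\hat\alpha\in\calC\}$, so that both relaxations share the same variables $\tau_\alpha$; only the edge sets differ. For $\alpha\in\calF$ and non-empty $\beta\subseteq\alpha$, write the marginal of $\tau_\alpha$ onto $\beta$ as $\mu^\alpha_\beta(\bx_\beta)=\sum_{\bx_\alpha\sim\bx_\beta}\tau_\alpha(\bx_\alpha)$, so that the constraint~\eqref{eq:WernerLP:c} for an edge $(\alpha,\beta)\in J$ reads simply $\mu^\alpha_\beta=\tau_\beta$. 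Since the objective~\eqref{eq:WernerLP:a} depends only on $\{\tau_\alpha\}_{\alpha\in\calC}$, it suffices to show that a point is feasible for the $J$-relaxation if and only if it is feasible for the maximal relaxation.

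One inclusion is immediate. The maximal relaxation uses the edge set $J_{\max}=\{(\alpha,\beta)\mid\alpha,\beta\in\calF,\ \beta\subset\alpha\}\supseteq J$, so it imposes at least as many constraints; hence every maximal-feasible point is $J$-feasible, and the maximal optimum is no smaller. The substance is the reverse inclusion: given any $\{\tau_\alpha\}$ feasible for the $J$-relaxation, I must verify the constraint $\mu^\alpha_\beta=\tau_\beta$ for \emph{every} $(\alpha,\beta)\in J_{\max}$.

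The key lemma I would isolate is a \emph{consistency of marginals} statement: for every non-empty $\beta$ and every pair $\alpha,\alpha'\in\calF_\beta$ one has $\mu^\alpha_\beta=\mu^{\alpha'}_\beta$. Two ingredients drive it. First, marginals are transitive: if $\beta\subseteq\epsilon\subseteq\delta$, then grouping the defining sum by the restriction to $\epsilon$ gives $\mu^\delta_\beta(\bx_\beta)=\sum_{\bx_\epsilon\sim\bx_\beta}\mu^\delta_\epsilon(\bx_\epsilon)$. Second, across any edge of $J_\beta$ the two endpoints $\delta,\epsilon$ both contain $\beta$, and (orienting the edge so that $\epsilon\subset\delta$) feasibility gives $\mu^\delta_\epsilon=\tau_\epsilon$; combining this with transitivity yields $\mu^\delta_\beta=\sum_{\bx_\epsilon\sim\bx_\beta}\tau_\epsilon(\bx_\epsilon)=\mu^\epsilon_\beta$. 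Thus the marginal onto $\beta$ is preserved along each edge of $(\calF_\beta,J_\beta)$, and since this induced subgraph is connected by hypothesis, an induction along a connecting path gives $\mu^\alpha_\beta=\mu^{\alpha'}_\beta$ for all $\alpha,\alpha'\in\calF_\beta$.

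With the lemma in hand the theorem follows at once. Fix any $(\alpha,\beta)\in J_{\max}$; then $\beta\in\calF$, so trivially $\beta\in\calF_\beta$, and applying the lemma to the pair $\alpha,\beta\in\calF_\beta$ gives $\mu^\alpha_\beta=\mu^\beta_\beta=\tau_\beta$, since marginalizing $\tau_\beta$ onto $\beta$ returns $\tau_\beta$. Hence all maximal constraints hold, $\{\tau_\alpha\}$ is feasible for the maximal relaxation, the two feasible regions coincide, and the optima are equal. I expect the only delicate point to be the bookkeeping in the transitivity/orientation step — verifying that an edge of $J_\beta$, irrespective of which endpoint is the larger cluster, always propagates marginal equality onto $\beta$ — whereas the passage from connectivity to a path induction is a direct analogue of the argument used in Proposition~\ref{prop:consistency}.
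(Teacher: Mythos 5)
Your core argument is exactly the one the paper uses: marginalization is transitive, each edge of $J_\beta$ preserves the marginal onto $\beta$ (orient it so the constraint $\mu^\delta_\epsilon=\tau_\epsilon$ applies, then regroup the sum), and connectivity of $(\calF_\beta,J_\beta)$ propagates this equality along a path by induction. This is precisely the displayed computation in the paper's proof of Theorem~\ref{th:Jmain'}, and your consistency-of-marginals lemma is the right engine; your verification of it is correct.

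The gap is the blanket assumption, made at the outset, that $\calF$ is already the full downward closure $\{\alpha\mid\alpha\subseteq\hat\alpha\mbox{ for some }\hat\alpha\in\calC\}$, ``so that both relaxations share the same variables.'' The theorem does not assume this, and the interesting instances are exactly those where it fails: the point of the result is that a graph with \emph{fewer} clusters and \emph{fewer} marginalization edges can already be as tight as the maximal relaxation (compare the singleton-separator relaxation, where $\calF$ contains only the factors and the singletons and the condition breaks precisely because $\calF_\beta$ is disconnected for two-element $\beta$). When $\calF$ is a proper subset of its downward closure, the two LPs live on different variable sets, so ``equivalent'' cannot be read as ``same feasible region''; one must \emph{extend} a feasible $\tau$ of the $(\calF,J)$-relaxation to each missing cluster $\gamma$ by setting $\tau_\gamma:=\mu^\alpha_\gamma$ for some $\alpha\in\calF_\gamma$, invoke your lemma to show this is independent of the choice of $\alpha$, and then verify the normalization and marginalization constraints for the newly defined variables. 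This is exactly what the paper does, and your lemma supplies the needed well-definedness, so the repair is short --- but as written your proof only covers the degenerate case in which the two relaxations differ solely in their edge sets.
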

Proofs of these theorems are   given in sections \ref{sec:proof:th:Jmain} and  \ref{sec:proof:th:Jmain'}. 

We say that graph $(\calF,J)$ is {\em maximal}
if it satisfies the following: (i) if $\alpha\in\calF$ then any non-empty subset of $\alpha$ is also in $\calF$;
(ii) if $\alpha,\beta\in\calF$ and $\beta\subseteq\alpha$ then $(\alpha,\beta)\in J$.
We believe that using maximal graphs may often be beneficial in practice:
they would give tighter relaxations compared to non-maximal graphs,
and solving them should not be much more difficult.
Indeed, if we use message passing algorithms then we need to send messages 
along edges $(\alpha,\beta)\in J$. A naive implementation of that takes $O(|\calL_\alpha|)$ time,
and so the complexity of message passing is mainly determined by the size of
maximal clusters in $\calF$. It remains to note that any non-maximal graph $(\calF,J)$ can be extended to a maximal one
without changing the set of maximal clusters.

Note that in our scheme sending a message from $\alpha$ to $\beta$ for $\{\alpha,\beta\}\in\calE$
takes $O(|\calL_\alpha|+|\calL_\beta|)$ time, if we use the technique described in Section 3.2 of our main paper.
Therefore, the complexity of applying TRW-S to energy~\eqref{eq:hatE}
should rougly match the complexity of other message passing techniques
that would solve an equivalent relaxation for the original energy~\eqref{eq:E}.
(One exception is when we have specialized high-order terms, as discussed in footnote~\footref{foot:specialized}).


\myparagraph{Approach in \cite{komodakis-etal-cvpr-2009}}
To conclude this section, we will discuss the relaxation solved in~\cite{komodakis-etal-cvpr-2009}.
Their work also uses square patches, and thus bears some resemblance to our approach.

Eq. (2)-(5) in \cite{komodakis-etal-cvpr-2009} say that they solve a relaxation
with singleton separators for some set $\calF$, i.e.\ $J=\{(\alpha,\{i\})\:|\:\alpha\in \calF,i\in\alpha\}$.
\cite{komodakis-etal-cvpr-2009} proposes two ways for choosing set $\calF$ for a grid graph: (a) as patches of a fixed size $K\times K$;
(b) as horizontal and vertical stripes of sizes $K\times N$ and $M\times K$ respectively.
In the first case set $J$ is strictly smaller than in our relaxation (if we take $\calV$ as the set of patches $K\times K$),
and the corresponding relaxation is weaker.
The second case is more diffucult to analyze, 
but we conjecture that the resulting relaxation would still weaker than ours.
First, we believe that the relaxation would not change if we ``break'' stripes into
patches while enforcing consistency between adjacent patches.
Now for each patch $\alpha$ of size $K\times K$ we have two sets of indicator variables, $\tau_{\alpha^{\tt horz}}(\cdot)$
and $\tau_{\alpha^{\tt vert}}(\cdot)$. While these variables have strong connections
to the appropriate neighbors of the same type (horizontal/vertical), the agreement between the two 
is enforced only loosely: they are just required to have the same unary marginals.
Intuitively, we believe that this would be weaker than the relaxation described in Theorem~\ref{th:Jmain}.


\subsubsection{Proof of Theorem~\ref{th:Jmain}}\label{sec:proof:th:Jmain}


First, let us write down the Schlesinger's LP for energy~\eqref{eq:hatE}.
It uses variables $\hat\tau_\alpha(\bx_\alpha)$ for $\alpha\in\calV$, $\bx_\alpha\in\calL_\alpha$
and variables  $\hat\tau_{\alpha\beta}(\bx_\alpha,\bx_\beta)$ for $\{\alpha,\beta\}\in\calE$, $(\bx_\alpha,\bx_\beta)\in\calL_\alpha\times\calL_\beta$
where $\hat\tau_{\alpha\beta}(\bx_\alpha,\bx_\beta)$ and $\hat\tau_{\beta\alpha}(\bx_\beta,\bx_\alpha)$ are treated as the same variable.
If $\bx_\alpha\nsim\bx_\beta$ then variable $\hat\tau_{\alpha\beta}(\bx_\alpha,\bx_\beta)$ will be zero at the optimum,
since the associated cost is infinite. Thus, we can eliminate such variables from the formulation. We get the following LP:
\begin{subequations}\label{eq:Xrelax}
\begin{eqnarray}
 &&\hspace{-90pt}\min \;\;\; \sum_{\alpha\in\calV}\sum_{\bx_\alpha}E_\alpha(\bx_\alpha)\hat\tau_{\alpha}(\bx_\alpha) \label{eq:Xrelax:a}\\
\mbox{s.t.~~~~~}\sum_{\bx_\alpha} \hat\tau_\alpha(\bx_\alpha)&\!\!\!\!=\!\!\!\!&1 \hspace{35pt}\forall \alpha\in\calV \label{eq:Xrelax:b}\\
\sum_{\bx_{\beta}:\bx_\beta\sim\bx_\alpha}\!\!\!\!\!\! \hat\tau_{\alpha\beta}(\bx_\alpha,\bx_\beta)&\!\!\!\!=\!\!\!\!&\hat\tau_\alpha(\bx_\alpha)\hspace{11pt}\forall\{\alpha,\beta\}\in \calE,\forall \bx_\alpha\quad\quad\label{eq:Xrelax:c}\\
 \hat\tau_\alpha(\bx_\alpha)&\!\!\!\!\ge\!\!\!\!&0                  \hspace{15pt}\forall \alpha\in\calV,\forall\bx_\alpha \label{eq:Xrelax:d} \\
 \hat\tau_{\alpha\beta}(\bx_\alpha,\bx_\beta)&\!\!\!\!\ge\!\!\!\!&0 \hspace{15pt}\forall \{\alpha,\beta\}\in\calE,\forall\bx_\alpha,\bx_\beta \label{eq:Xrelax:e} \\
 & & \hspace{70pt} \mbox{s.t.~} \bx_\alpha\sim\bx_\beta \nonumber
\end{eqnarray}
\end{subequations}
Our goal is to show that this LP is equivalent to the LP~\eqref{eq:WernerLP} with the graph $(\calF,J)$ defined in Theorem~\ref{th:Jmain}.
Let $\hat\Omega$ and $\Omega$ be the feasible sets of the two LPs. We will prove the claim by showing that there exist cost-preserving
mappings $\hat\Omega\rightarrow\Omega$ and $\Omega\rightarrow\hat\Omega$.




\paragraph{Mapping $\hat\Omega\rightarrow\Omega$}
Given a vector $\hat\tau\in\hat\Omega$, we define vector $\tau$ as follows.
Consider $\gamma\in\calF$. By the definition of $\calF$, there exists $\alpha\in\calV$ with $\gamma\subseteq\alpha$. We set
\begin{equation}
\tau_\gamma(\bx_\gamma)=\sum_{\bx_{\alpha}:\bx_\alpha\sim\bx_\gamma}\!\!\!\!\!\! \hat\tau_{\alpha}(\bx_\alpha)
\label{eq:FASFNAOSIF}
\end{equation}
Let us show that this definition does not depend on the choice of $\alpha$. Suppose there are
two clusters $\alpha,\beta\in\calV$ with $\gamma\subseteq\alpha\cap\beta$. 
We consider three cases:

\myparagraph{Case 1: $\gamma=\alpha\cap\beta$, $\{\alpha,\beta\}\in\calE$.} 
Using condition~\eqref{eq:Xrelax:c} for pairs $(\alpha,\beta)$ and $(\beta,\alpha)$, we obtain the desired result:
\begin{eqnarray*}
\sum_{\bx_{\alpha}:\bx_\alpha\sim\bx_\gamma}\!\!\!\!\!\! \hat\tau_{\alpha}(\bx_\alpha) 
= \sum_{\bx_{\alpha}:\bx_\alpha\sim\bx_\gamma}\;\;\sum_{\bx_{\beta}:\bx_\beta\sim\bx_\alpha}\!\!\!\!\!\! \hat\tau_{\alpha\beta}(\bx_\alpha,\bx_\beta) \hspace{10pt}\\
 = \sum_{\bx_{\alpha}:\bx_\alpha\sim\bx_\gamma}\;\;\sum_{\bx_{\beta}:\bx_\beta\sim\bx_\gamma}\!\!\!\!\!\! \hat\tau_{\alpha\beta}(\bx_\alpha,\bx_\beta) \hspace{30pt}\\
\hspace{20pt} = \sum_{\bx_{\beta}:\bx_\beta\sim\bx_\gamma}\;\;\sum_{\bx_{\alpha}:\bx_\alpha\sim\bx_\beta}\!\!\!\!\!\! \hat\tau_{\alpha\beta}(\bx_\alpha,\bx_\beta) 
= \sum_{\bx_{\beta}:\bx_\beta\sim\bx_\gamma}\!\!\!\!\!\! \hat\tau_{\beta}(\bx_\beta) 
\end{eqnarray*}

\myparagraph{Case 2: $\gamma\subset\alpha\cap\beta$, $\{\alpha,\beta\}\in\calE$.} 
Denote $\gamma'=\alpha\cap\beta$. Using the result that we just proved we obtain
\begin{eqnarray*}
\sum_{\bx_{\alpha}:\bx_\alpha\sim\bx_\gamma}\!\!\!\!\!\! \hat\tau_{\alpha}(\bx_\alpha) 
= \sum_{\bx_{\gamma'}:\bx_{\gamma'}\sim\bx_\gamma}\;\;\sum_{\bx_{\alpha}:\bx_\alpha\sim\bx_{\gamma'}}\!\!\!\!\!\! \hat\tau_{\alpha}(\bx_\alpha) 
\hspace{35pt} \\
\hspace{35pt}= \sum_{\bx_{\gamma'}:\bx_{\gamma'}\sim\bx_\gamma}\;\;\sum_{\bx_{\beta}:\bx_\beta\sim\bx_{\gamma'}}\!\!\!\!\!\! \hat\tau_{\beta}(\bx_\beta) 
= \sum_{\bx_{\beta}:\bx_\beta\sim\bx_\gamma}\!\!\!\!\!\! \hat\tau_{\beta}(\bx_\beta) 
\end{eqnarray*}

\myparagraph{Case 3: $\{\alpha,\beta\}\notin\calE$.} 
By the assumption of Proposition~\ref{prop:partialE}, nodes $\alpha$ and $\beta$ are connected
by a path $\alpha_0,\ldots,\alpha_k$ in graph $(\calV_{\alpha\cap\beta},\calE_{\alpha\cap\beta})$.
Note, $\gamma\subseteq\alpha\cap\beta\subseteq\alpha_i$ for each $i\in[0,k]$.
As proved above, for each $i\in[0,k-1]$ there holds
\begin{eqnarray*}
\sum_{\bx_{\alpha_{i}}:\bx_{\alpha_i}\sim\bx_\gamma}\!\!\!\!\!\! \hat\tau_{\alpha_{i}}(\bx_{\alpha_{i}}) 
=\sum_{\bx_{\alpha_{i+1}}:\bx_{\alpha_{i+1}}\sim\bx_\gamma}\!\!\!\!\!\! \hat\tau_{\alpha_{i+1}}(\bx_{\alpha_{i+1}}) 
\end{eqnarray*}
Using an induction argument, we obtain the desired result.

We proved that~\eqref{eq:FASFNAOSIF} is a valid definition that does not depend on the choice of $\alpha$.
Showing that obtained vector $\tau$ satisfies~\eqref{eq:WernerLP:c} for each $(\alpha,\beta)\in J$ is straightforward:
in the definition~\eqref{eq:FASFNAOSIF} for $\tau_\alpha(\bx_\alpha)$ and $\tau_\beta(\bx_\beta)$
we need to select the same cluster $\hat\alpha\in\calV$ with $\beta\subset\alpha\subseteq\hat\alpha$,
then~\eqref{eq:WernerLP:c} easily follows. Condition~\eqref{eq:Xrelax:b} implies~\eqref{eq:WernerLP:b}
for all $\alpha\in\calV$; combining this with~\eqref{eq:WernerLP:c}
gives condition~\eqref{eq:WernerLP:b} for all $\alpha\in\calF$. We proved that $\tau\in\Omega$.

\paragraph{Mapping $\Omega\rightarrow\hat\Omega$}
Consider vector $\tau\in\Omega$. We define $\hat\tau_\alpha(\bx_\alpha)=\tau_\alpha(\bx_\alpha)$ for clusters $\alpha\in\calV$ and
labelings $\bx_\alpha$.
For each edge $\{\alpha,\beta\}\in\calE$ and labelings $\bx_\alpha\sim\bx_\beta$ we define
\begin{equation}
\hat\tau_{\alpha\beta}(\bx_\alpha,\bx_\beta)=
\frac{\tau_\alpha(\bx_\alpha)\tau_\beta(\bx_\beta)}{\tau_\gamma(\bx_\gamma)} 
\end{equation}
where $\gamma=\alpha\cap\beta$; if $\tau_\gamma(\bx_\gamma)=0$ then we define $\hat\tau_{\alpha\beta}(\bx_\alpha,\bx_\beta)=0$ instead.

Let us show that~\eqref{eq:Xrelax:c} holds for a pair $\{\alpha,\beta\}\in\calE$ and a fixed labeling $\bx_\alpha$.
Denote $\gamma=\alpha\cap\beta$, and let $\bx_\gamma$ be the restriction of $\bx_\alpha$ to $\gamma$.
If $\tau_\gamma(\bx_\gamma)=0$ then from~\eqref{eq:WernerLP:c},\eqref{eq:WernerLP:d} we have $\tau_\alpha(\bx_\alpha)=0$, and so
both sides of~\eqref{eq:Xrelax:c} are zeros.
Otherwise we can write
\begin{eqnarray*}
\sum_{\bx_{\beta}:\bx_\beta\sim\bx_\alpha}\!\!\!\!\!\! \hat\tau_{\alpha\beta}(\bx_\alpha,\bx_\beta)
=\frac{\tau_\alpha(\bx_\alpha)}{\tau_\gamma(\bx_\gamma)}\sum_{\bx_{\beta}:\bx_\beta\sim\bx_\alpha}\!\!\!\!\!\! \tau_{\beta}(\bx_\beta) 
\hspace{50pt}\\
\hspace{10pt}=\frac{\tau_\alpha(\bx_\alpha)}{\tau_\gamma(\bx_\gamma)}\sum_{\bx_{\beta}:\bx_\beta\sim\bx_\gamma}\!\!\!\!\!\! \tau_{\beta}(\bx_\beta) 
=\frac{\tau_\alpha(\bx_\alpha)}{\tau_\gamma(\bx_\gamma)}\cdot \tau_{\gamma}(\bx_\gamma)
=\hat\tau_\alpha(\bx_\alpha)
\end{eqnarray*}
where we used condition~\eqref{eq:WernerLP:c}. We proved that $\hat\tau\in\hat\Omega$.

We finished the construction of mappings 
 $\hat\Omega\rightarrow\Omega$ and $\Omega\rightarrow\hat\Omega$.
In both cases we have $\hat\tau_\alpha(\bx_\alpha)=\tau_\alpha(\bx_\alpha)$ for $\alpha\in\calV$,
and therefore the mappings are cost-preserving.

\subsubsection{Proof of Theorem~\ref{th:Jmain'}}\label{sec:proof:th:Jmain'}
Denote $\hat \calF=\{\alpha\:|\:\alpha\subseteq\hat\alpha\mbox{ for some }\hat\alpha\in\calF\}$
and $\hat J=\{(\alpha,\beta)\:|\:\alpha,\beta\in\calF,\beta\subset\alpha\}$.
Let $\tau$ be a feasible vector of relaxation~\eqref{eq:WernerLP} with the graph $(\calF,J)$.
It suffices to show that such vector can be extended to a feasible vector $\hat\tau$ of 
relaxation~\eqref{eq:WernerLP} with the graph $(\hat\calF,\hat J)$.

Consider $\gamma\in\hat\calF$. By the definition of $\hat\calF$,
there exists $\alpha\in\calF$ with $\gamma\subseteq\alpha$. We set
\begin{equation}
\hat \tau_\gamma(\bx_\gamma)=\sum_{\bx_\alpha:\bx_\alpha\sim\bx_\gamma}\hat \tau_\alpha(\bx_\alpha)
\label{eq:fagasdfahsdga}
\end{equation}
Let us show that this definition does not depend on the choice of $\alpha$.
Suppose that there exist two clusters $\alpha,\beta\in\calF$ with $\gamma\in\alpha\cap\beta$.
We need to show that
\begin{equation}
\sum_{\bx_\alpha:\bx_\alpha\sim\bx_\gamma}\hat \tau_\alpha(\bx_\alpha)
=\sum_{\bx_\beta:\bx_\beta\sim\bx_\gamma}\hat \tau_\beta(\bx_\beta)
\end{equation}
By the assumption of the theorem, nodes $\alpha$ and $\beta$ are connected in graph $(\calF_\gamma,J_\gamma)$.
It suffices to prove the claim in the case when $\alpha$ and $\beta$ are connected by a single edge;
the main claim will then follow by induction on the length of the path between $\alpha$ and $\beta$.

Assume that $(\alpha,\beta)\in J$ (the case $(\beta,\alpha)\in J$ is symmetric).
By the choice of $\alpha,\beta$ we have $\gamma\subseteq\beta\subset\alpha$.
Using that facts that $\tau$ is feasible and $(\alpha,\beta)\in J$,
we can write
\begin{eqnarray*}
\sum_{\bx_\beta:\bx_\beta\sim\bx_\gamma}\tau_\beta(\bx_\beta)
&=&
\sum_{\bx_\beta:\bx_\beta\sim\bx_\gamma}\sum_{\bx_\alpha:\bx_\alpha\sim\bx_\beta}\tau_\alpha(\bx_\alpha) \\
&=&
\sum_{\bx_\alpha:\bx_\beta\sim\bx_\gamma}\tau_\alpha(\bx_\alpha)
\end{eqnarray*}

We proved that eq.~\eqref{eq:fagasdfahsdga} gives a valid definition of vector $\hat\tau$
that does not depend on the choice of $\alpha$.  From this definition we obtain
that $\hat\tau_\alpha=\tau_\alpha$ for all $\alpha\in\calF$.
It is also straightforward to check that $\hat\tau$ satisfies marginalization constraint~\eqref{eq:WernerLP:c}
for any $(\alpha,\beta)\in\hat J$.


\section{Patch Cost Assignments}\label{app:patchcost}

\begin{figure*}
\begin{center}
\begin{tabular}{ccccccc}
\includegraphics[width=20mm]{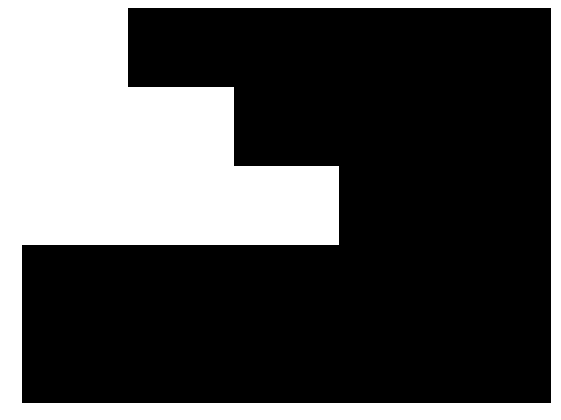} &
\includegraphics[width=20mm]{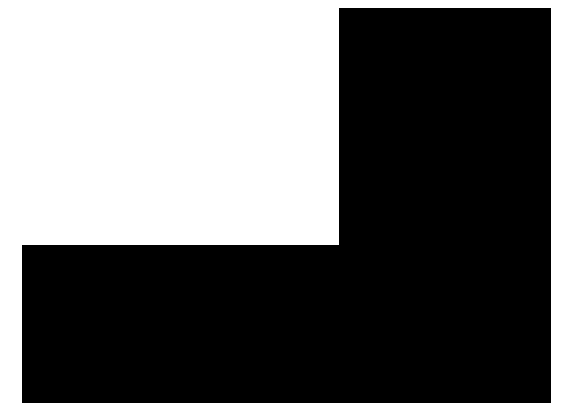} &
\includegraphics[width=20mm]{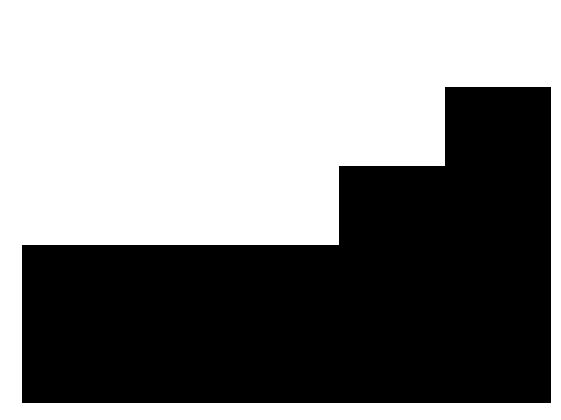} &
\includegraphics[width=20mm]{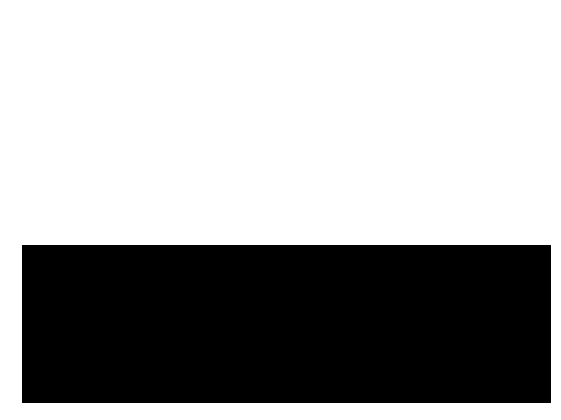} &
\includegraphics[width=20mm]{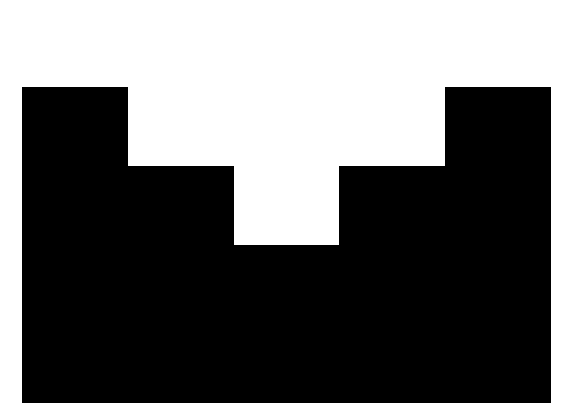} &
\includegraphics[width=20mm]{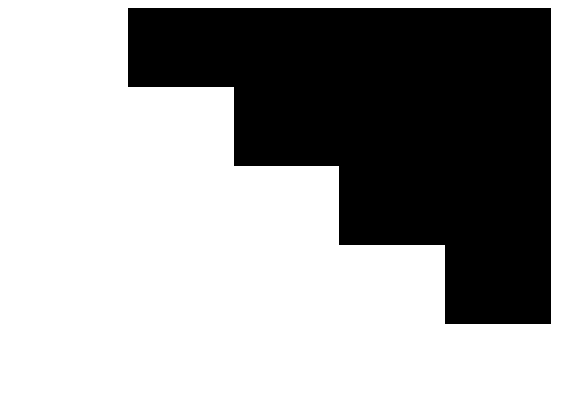} &
\includegraphics[width=20mm]{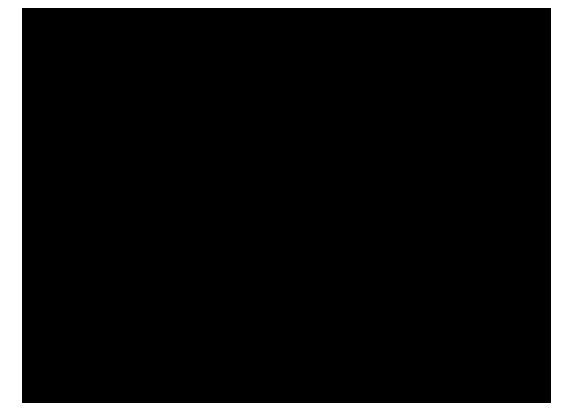} 
\end{tabular}
\caption{Seven of the $5\times 5$ windows used for computing patch costs for $3 \times 3$ curvature.
(The rest are obtained as rotations, reflections of the patches and inversions of the pixel assignments).}
\label{basewindows}
\begin{tabular}{ccccccc}
\includegraphics[width=20mm]{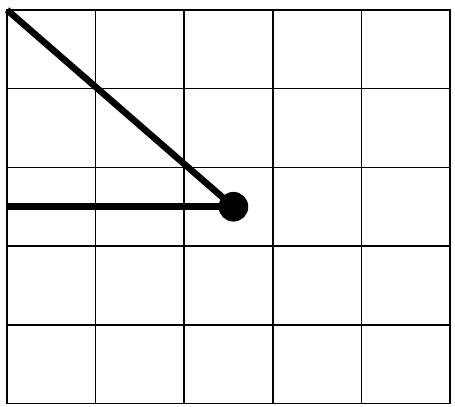} &
\includegraphics[width=20mm]{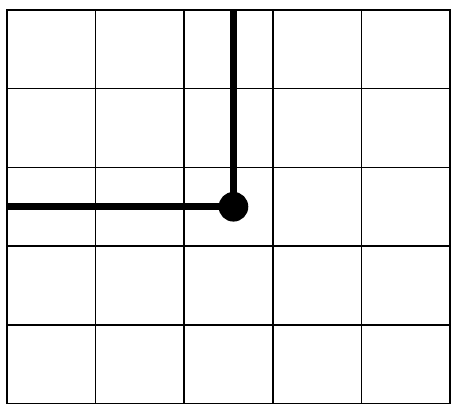} &
\includegraphics[width=20mm]{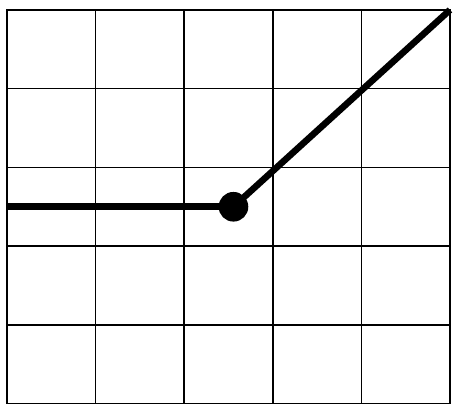} &
\includegraphics[width=20mm]{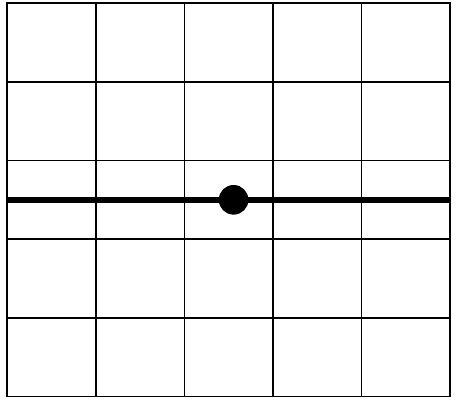} &
\includegraphics[width=20mm]{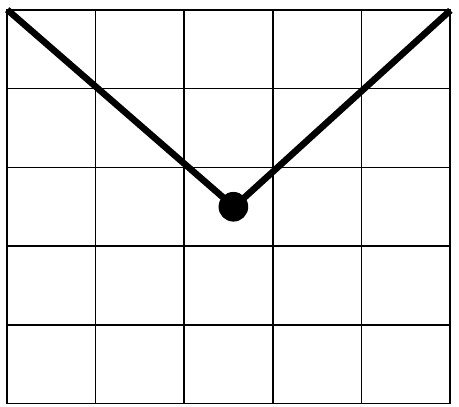} &
\includegraphics[width=20mm]{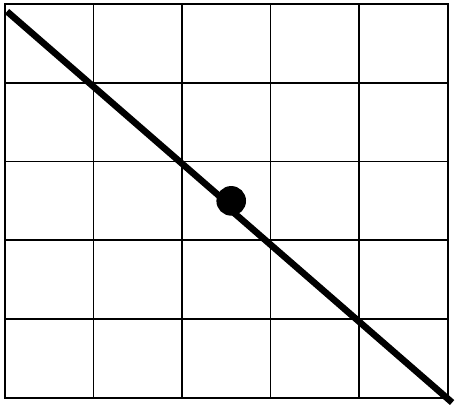} &
\includegraphics[width=20mm]{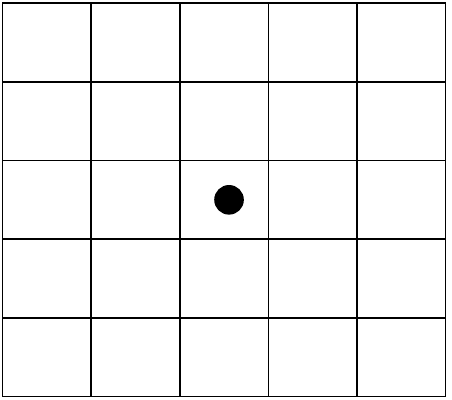} \\
$\frac{3 \pi}{4}$ & $\frac{\pi}{2}$ & $\frac{\pi}{4}$ & $0$ & $\frac{\pi}{2}$ & $0$ & $0$
\end{tabular}
\end{center}
\caption{Iterpretation of the segmentation boundary and it curvature penalties fore each of the windows in Figure \ref{basewindows}.}
\label{windowinterp}
\end{figure*}

In this appendix we describe our approach of determining patch costs for $\pi/4$ precision curvature with patches of size $3\times3$
(the case of $\pi/8$ precision with $5\times5$ patches is similar). Since patches are overlapping changes in boundary direction will be visible in the assignments of more than one super node. We need to make sure that the total contribution of the patch assignments equals the curvature of the segmentation boundary.

To determine the assignments in the case of $3\times 3$ patches, we generate windows of size $5 \times 5$.
These windows contain the binary assignments that would result from a segmentation boundary which transitions between two directions at the center of the window, see Figures~\ref{basewindows} and \ref{windowinterp}. 
Note that all combinations of edges are obtained through symmetries (rotations, reflections and inversions) of these windows.
By looking in these windows we can determine all super node assignments that are present in the vicinity of such a transition and constrain their sum to be the correct curvature penalty.

\begin{figure}[htb]
\begin{center}
\begin{pgfpicture}{0cm}{0cm}{8cm}{5cm}
\pgfputat{\pgfxy(0,0)}{\includegraphics[width=80mm]{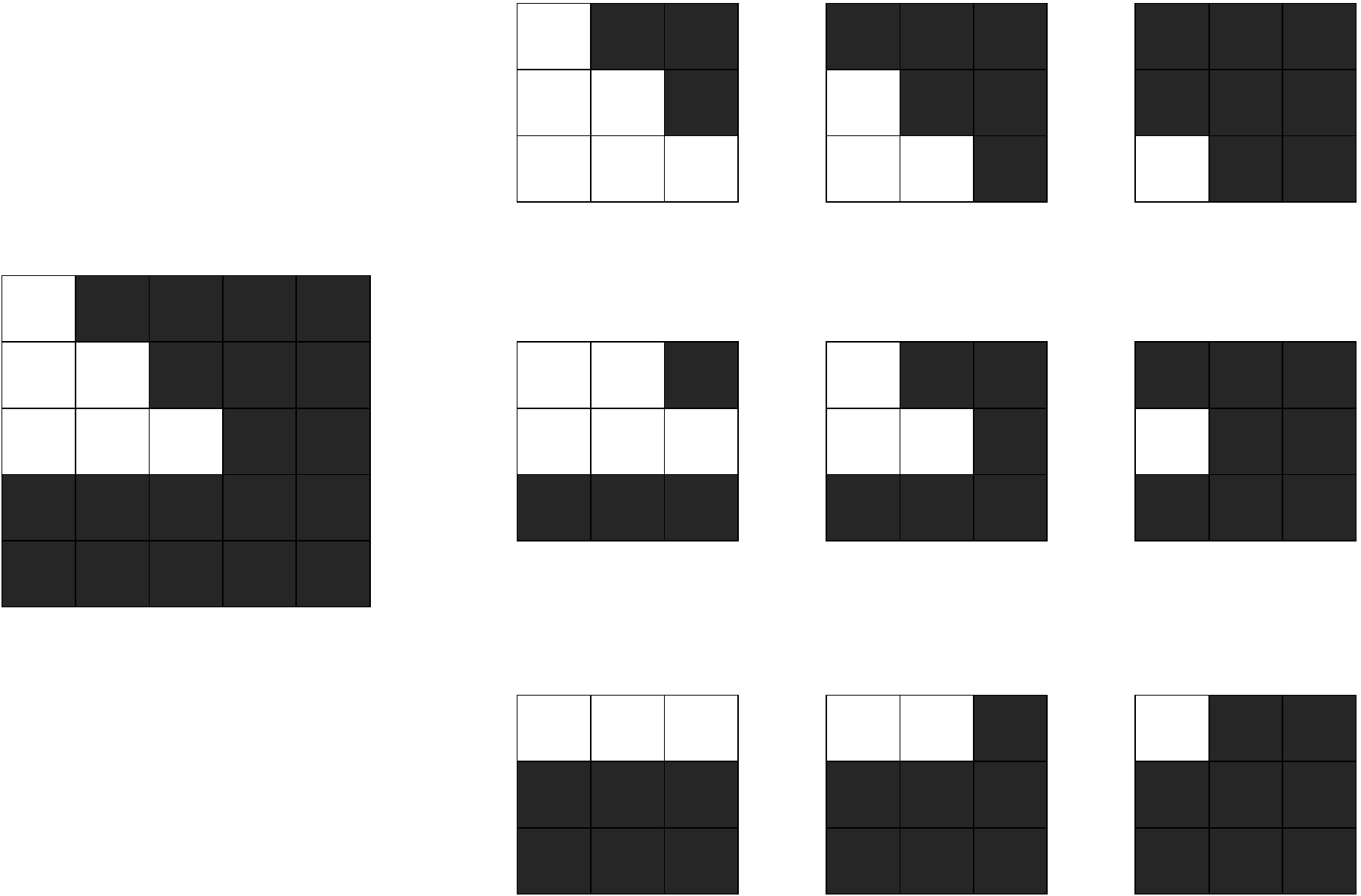}}
\pgfputat{\pgfxy(3.5,5.4)}{$l_{38}$}
\pgfputat{\pgfxy(5.3,5.4)}{$l_{311}$}
\pgfputat{\pgfxy(7.2,5.4)}{$l_{447}$}
\pgfputat{\pgfxy(3.5,3.4)}{$l_{452}$}
\pgfputat{\pgfxy(5.3,3.4)}{$l_{486}$}
\pgfputat{\pgfxy(7.2,3.4)}{$l_{503}$}
\pgfputat{\pgfxy(3.5,1.3)}{$l_{504}$}
\pgfputat{\pgfxy(5.3,1.3)}{$l_{508}$}
\pgfputat{\pgfxy(7.2,1.3)}{$l_{510}$}
\end{pgfpicture}
\end{center}
\caption{First window of Figure~\ref{basewindows} and its super node assignments.}
\label{windowex}
\end{figure}

Consider for example the window in Figure~\ref{windowex}. If we let the labels of the super nodes be $l_a$ where $a \in {0,...,511}$ encodes the state of the individual pixels then the patch in Figure~\ref{windowex} gives us the linear constraint
\begin{eqnarray}
l_{38}+l_{311}+l_{447}+l_{452}+l_{486}+l_{503} & & \nonumber\\ +l_{504}+l_{508}+l_{510} & = & \frac{3\pi}{4}.
\end{eqnarray}
In a similar way each window/boundary transition gives us a linear equality constraint. In addition we require that $l_a \geq 0$, $\forall a \in {0,...,511}$
and that $l_a = 0$ for the labels that do not occur in any of the windows.
This gives us a system of linear equalities and inequalities. To find a solution we randomly select a linear cost function (with positive entries) and solve the resulting linear program. Since the system is under determined the individual label costs can vary depending on the random objective function. 
However, the linear equalities ensure that the resulting curvature estimate obtained for each transition between boundary directions is correct when combining patch assignments.

\end{document}